\documentclass{article}

\PassOptionsToPackage{numbers, compress}{natbib}



\usepackage[final]{neurips_2023}


\usepackage[utf8]{inputenc} 
\usepackage[T1]{fontenc}    
\usepackage{hyperref}       
\usepackage{url}            
\usepackage{booktabs}       
\usepackage{amsfonts}       
\usepackage{nicefrac}       
\usepackage{microtype}      
\usepackage{xcolor}         
\usepackage{amssymb}
\usepackage{multirow}
\usepackage{mathtools}
\usepackage{amsmath}
\usepackage{amsthm}
\usepackage{caption}
\usepackage{subcaption}
\usepackage{wrapfig}

\title{Revisiting Implicit Differentiation for Learning Problems in Optimal Control}

%

\author{%
  Ming Xu\\
  School of Computing\\
  Australian National University\\
  \texttt{mingda.xu@anu.edu.au} \\
  \And
  Timothy Molloy \\
  School of Engineering \\
  Australian National University \\
  \texttt{timothy.molloy@anu.edu.au} \\
  \AND
  Stephen Gould \\
  School of Computing\\
  Australian National University\\
  \texttt{stephen.gould@anu.edu.au} \\
}

\begin{document}

\maketitle

\newtheorem{propn}{Proposition}
\newtheorem{lemma}{Lemma}
\newtheorem{rmk}{Remark}

\newcommand{\bbR}{\mathbb{R}}
\newcommand{\tD}{\text{D}}
\newcommand{\tx}{\tilde{x}}

\newcommand{\cL}{\mathcal{L}}

\newcommand{\methodname}{\text{IDOC}}

\begin{abstract}

This paper proposes a new method for differentiating through optimal trajectories arising from non-convex, constrained discrete-time optimal control (COC) problems using the implicit function theorem (IFT). Previous works solve a differential Karush-Kuhn-Tucker (KKT) system for the trajectory derivative, and achieve this efficiently by solving an auxiliary Linear Quadratic Regulator (LQR) problem. In contrast, we directly evaluate the matrix equations which arise from applying variable elimination on the Lagrange multiplier terms in the (differential) KKT system. By appropriately accounting for the structure of the terms within the resulting equations, we show that the trajectory derivatives scale linearly with the number of timesteps. Furthermore, our approach allows for easy parallelization, significantly improved scalability with model size, direct computation of vector-Jacobian products and improved numerical stability compared to prior works. As an additional contribution, we unify prior works, addressing claims that computing trajectory derivatives using IFT scales quadratically with the number of timesteps. We evaluate our method on a both synthetic benchmark and four challenging, learning from demonstration benchmarks including a 6-DoF maneuvering quadrotor and 6-DoF rocket powered landing.

\end{abstract}

\section{Introduction}

This paper addresses end-to-end learning problems that arise in the context of constrained optimal control, including trajectory optimization, inverse optimal control, and system identification. 
We propose a novel, computationally efficient approach to computing analytical derivatives of state and control trajectories that solve constrained optimal control (COC) problems with respect to underlying parameters in cost functions, system dynamics, and constraints (e.g.,~state and control limits).

The efficient computation of these \emph{trajectory derivatives} is important in the (open-loop) solution of optimal control problems for both trajectory optimization and model predictive control (MPC). 
Such derivatives are also crucial in inverse optimal control (also called inverse reinforcement learning or learning from demonstration), where given expert demonstration trajectories, the objective is to compute parameters of the cost function that best explain these trajectories.
Extensions of inverse optimal control that involve inferring parameters of system dynamics in addition to cost functions also subsume system identification, and provide further motivation for the efficient computation of trajectory derivatives. 
Our proposed method of computing trajectory derivatives enables direct minimization of a loss function defined over optimal trajectories (e.g., imitation loss) using first-order descent techniques and is a direct alternative to existing works~\citep{amos2018differentiable, jin2020pontryagin, jin2021safe}, including those based on bespoke derivations of Pontryagin's maximum principle (PMP) in discrete time~\citep{jin2020pontryagin, jin2021safe}.

Analytical trajectory derivatives for COC problems are derived by differentiating through the optimality conditions of the underlying optimization problem and applying Dini's implicit function theorem. Derivatives are recovered as the solution to a system of linear equations commonly referred to as the (differential) KKT system. This framework underlies all existing works~\citep{amos2018differentiable, jin2020pontryagin, jin2021safe} as well as our proposed approach. Similarly to~\citet{amos2018differentiable}, we construct the differential KKT system, however we use the identities from~\citet{gouldpami2022} that apply variable elimination on the Lagrange multipliers relating to the dynamics and constraints. We show how to exploit the block-sparse structure of the resultant matrix equations to achieve linear complexity in computing the trajectory derivative with trajectory length. Furthermore, we can parallelize the computation, yielding superior scalability and numerical stability on multithreaded systems compared to methods derived from the PMP~\citep{jin2020pontryagin, jin2021safe}.

Our specific contributions are as follows. First, we derive the analytical trajectory derivatives for a broad class of constrained (discrete-time) optimal control problems with additive cost functions and dynamics described by first-order difference equations. Furthermore, we show that the computation of these derivatives is linear in trajectory length by exploiting sparsity in the resulting matrix expressions. Second, we describe how to parallelize the computation of the trajectory derivatives, yielding lower computation time and superior numerical stability for long trajectories. Third, we show how to directly compute vector-Jacobian products (VJPs) with respect to some outer-level loss over optimal trajectories, yielding further improvements to computation time in the context of bi-level optimization. This setting commonly arises in the direct solution of inverse optimal control problems~\citep{Hatz2012,Molloy2022,Mombaur2010}. Finally, we provide discussion unifying existing methods for computing trajectory derivatives~\citep{amos2018differentiable, jin2020pontryagin, jin2021safe}.
We dub our method \methodname{} (Implicit Differentiation for Optimal Control) and validate it across numerous experiments, showing a consistent speedup over existing approaches derived from the PMP. Furthermore, for constrained problems, \methodname{} provides significantly improved trajectory derivatives, resulting in superior performance for a general learning from demonstration (LfD) task.\footnote{Code available at \href{https://github.com/mingu6/Implicit-Diff-Optimal-Control}{https://github.com/mingu6/Implicit-Diff-Optimal-Control}}.


\begin{figure}[t!]
     \centering
     \begin{subfigure}[b]{0.34\textwidth}
         \centering
         \includegraphics[width=\textwidth]{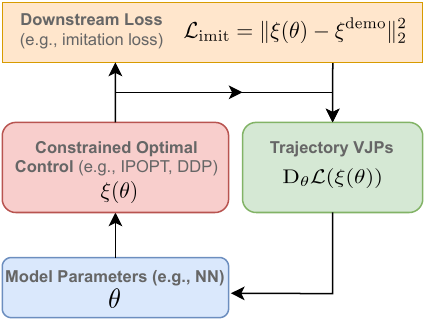}
         \caption{System Figure}
         \label{fig:system}
     \end{subfigure}
     \hspace{1.5cm}
     \begin{subfigure}[b]{0.37\textwidth}
         \centering
         \includegraphics[width=\textwidth]{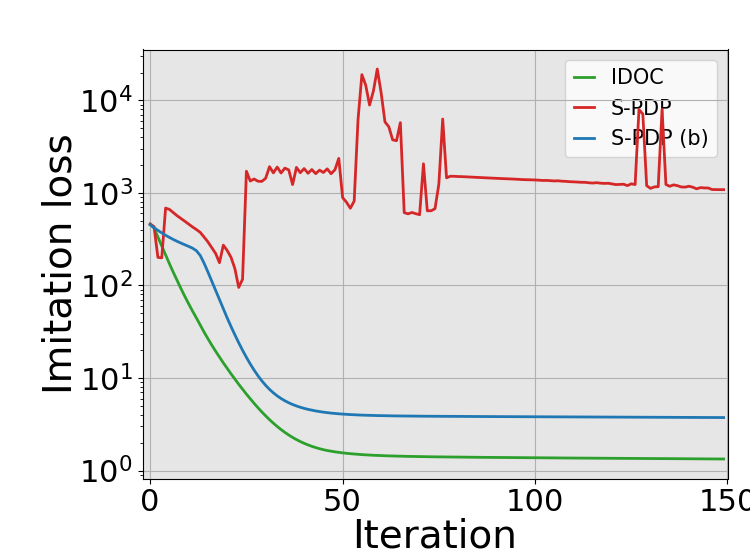}
         \caption{Imitation Learning: Rocket Landing}
         \label{fig:imitation}
     \end{subfigure}
    \caption{\textbf{Left a):} Our approach addresses learning problems in optimal control such as imitation learning. \methodname{} provides a method for computing the trajectory derivative or alternatively, vector-Jacobian products with respect to an outer-level loss, important for solving this problem using an end-to-end learning approach. \textbf{Right b):} \methodname{} yields superior trajectory derivatives for the imitation learning task when inequality constraints (rocket tilt and thrust limits) are present.}
    \label{fig:first}
\end{figure}
\section{Related Work}
\paragraph{Differentiating Through Optimal Control Problems.}
An optimal control (OC) problem consists of (system) dynamics, a cost function, an optimal control policy, and a set of state and/or control constraints.
Learning problems in optimal control involve learning some or all of these aspects, with the problem of learning dynamics referred to as system identification~\citep{Ljung1999}, the problem of learning the cost function referred to as inverse optimal control (or inverse reinforcement learning or learning from demonstration)~\citep{Mombaur2010,Hatz2012,Jin2018,Molloy2022,Ratliff2006,Ziebart2008}, and the problem of learning the control policy referred to as reinforcement learning~\citep{Bertsekas2019}.
Recent work~\citep{jin2020pontryagin} has shown that solving these learning problems can be approached in a unified end-to-end fashion by minimizing task-specific loss functions with respect to (unknown) parameters of the associated OC problem.
This unified formulation places great importance on the efficient computation of derivatives of optimal trajectories with respect to said parameters.
Methods reliant on computing trajectory derivatives have, however, been mostly avoided (and argued against) in the robotics and control literature due to concerns about computational tractability.
For example, bi-level methods of inverse optimal control \citep{Mombaur2010,Molloy2022} have mostly been argued against in favor of methods that avoid derivative calculations by instead seeking to satisfy optimality conditions derived from KKT \citep{Keshavarz2011,Englert2017} or PMP conditions \citep{Hatz2012,Molloy2018,Johnson2013,Jin2018}.

\paragraph{Analytical Trajectory Derivatives.} Our method is a direct alternative to methods such as DiffMPC~\cite{amos2018differentiable}, PDP~\cite{jin2020pontryagin} and its extension Safe-PDP~\cite{jin2021safe}, which differentiate through optimality conditions to derive trajectory derivatives. Common to these methods is identifying that trajectory derivatives can be computed by solving an auxiliary affine-quadratic OC problem, and furthermore, that this can be done efficiently using a matrix Riccati equation. While our approach still differentiates the optimality conditions, we avoid solving matrix Riccati equations and show that this enables easy computation of VJPs, parallelization across timesteps and improved numerical stability. Like Safe-PDP, \methodname{} computes derivatives through COC problems with smooth inequality constraints, however we show in our experiments that our derivatives are significantly more stable during training. Section~\ref{ssec:unification} provides a detailed description of the differences between \methodname{} and existing methods.

\paragraph{Differentiable Optimization.} 

Our work falls under the area of differentiable optimization, which aims to embed optimization problems within end-to-end learning frameworks. \citet{gouldpami2022} proposed the deep declarative networks framework, and provide identities for differentiating through continuous, inequality-constrained optimization problems. Combinatorial problems~\citep{menschddp, vlastelicabbox}, as well as specialized algorithms for convex problems~\citep{cvxpylayers2019} have also been addressed. Differentiable optimization has been applied to end-to-end learning of state estimation~\cite{yi2021iros, sodhi2021leo} and motion planning problems~\cite{bhardwaj2019differentiable, amos2018differentiable, landrybilevelmotion} in robotics, see~\citet{pineda2022theseus} for a recent survey. In addition, numerous machine learning and computer vision tasks such as pose estimation~\cite{campbell2020solving, Sarlin_2020_CVPR}, meta-learning~\cite{Lee_2019_CVPR}, sorting~\cite{cherianrankpooling, blondelfastsorting} and aligning time series~\citep{xu2023deep} have been investigated under this setting. 
\section{Constrained Optimal Control Formulation}\label{sec:oc_formulation}

In this section, we provide an overview of the COC problems we consider in this paper. These problems involve minimizing a cost function with an additive structure, subject to constraints imposed by system dynamics and additional arbitrary constraints such as state and control limits. As a result, they can be formulated as a non-linear program (NLP). We will discuss how to differentiate through these COC problems in Section \ref{sec:backward}.

\subsection{Preliminaries}

First, we introduce notation around differentiating vector-valued functions with respect to vector arguments, consistent with~\citet{gouldpami2022}. Let $f:\bbR^n \rightarrow \bbR^m$ be a vector-valued function with vector arguments and let $\tD f \in \bbR^{n\times m}$ be the (matrix-valued) derivative where elements are given by
\begin{equation}
\left( \tD f(x) \right)_{ij} = \frac{\partial f_i}{\partial x_j}(x).
\end{equation}
For a scalar-valued function with (multiple) vector-valued inputs $f:\bbR^n \times \bbR^m \rightarrow \bbR$ evaluated as $f(x, y)$, let the second order derivatives $\tD_{XY}^2 = \tD_{X} (\tD_Y f)^\top$. See~\citet{gouldpami2022} for more details.

\subsection{Optimization Formulation for Constrained Optimal Control}

We can formulate the (discrete-time) COC problem as finding a cost-minimizing trajectory subject to constraints imposed by (possibly non-linear) system dynamics. In addition, trajectories may be subject to further constraints such as state and control limits. 

To begin, let the dynamics governing the COC system at time $t$ be given by $x_{t+1} = f_t(x_t, u_t; \theta)$, where $x_t\in\bbR^n$, $u_t\in\bbR^m$ denotes state and control variables, respectively\footnote{The subscript $f_t$ allows for time-varying dynamics in principle, though we do not evaluate \methodname{} on any optimal control problems that have this property in our experiments.}. For time horizon $T>0$, let $x \triangleq (x_0, x_1, \ldots, x_T)$ and $u \triangleq (u_0, u_1, \ldots, u_{T-1})$ denote the state and control trajectories, respectively. Furthermore, let $\theta\in\bbR^d$ be the parameter vector that parameterizes our COC problem. Our COC problem is then equivalent to the following constrained optimization problem
\begin{equation}\label{eq:coc}
\begin{array}{rlr}
    \text{minimize}   & J(x, u; \theta) \triangleq \sum_{t=0}^{T-1} c_t(x_t, u_t; \theta) + c_T(x_T; \theta) & \\
    \text{subject to} &     &                 \\
     & x_0 = x_\text{init} & \text{(initial state)}\\
               & x_{t+1} - f_t(x_t, u_t; \theta)  = 0 \quad \forall t\in\{0, \ldots, T-1\} & \text{(dynamics)} \\
               & g_t(x_t, u_t; \theta) \leq 0 \quad\quad\quad\quad\, \forall t\in\{0, \ldots, T-1\}, & \text{(path ineq. constraints)}\\
               & h_t(x_t, u_t; \theta) = 0 \quad\quad\quad\quad \forall t\in\{0, \ldots, T-1\}, & \text{(path eq. constraints)}\\
              & g_T(x_T; \theta) \leq 0 & \text{(terminal ineq. constraints)} \\
               & h_T(x_T; \theta) = 0, & \text{(terminal eq. constraints)}
\end{array}
\end{equation}
where $f_t:\bbR^n\times \bbR^m \times \bbR^d \rightarrow \bbR^n$ describe the dynamics, and $c_t:\bbR^n\times \bbR^m \times \bbR^d \rightarrow \bbR$, $c_T:\bbR^n\times \bbR^d\ \rightarrow \bbR$ are the instantaneous and terminal costs, respectively. Furthermore, the COC system may be subject to (vector-valued) inequality constraints $g_t:\bbR^n\times \bbR^m \times \bbR^{d} \rightarrow \bbR^{q_t}$ and $g_T:\bbR^n \times \bbR^d \rightarrow  \bbR^{q_T}$ such as control limits and state constraints, for example. Additional equality constraints $h_t:\bbR^n\times \bbR^m \times \bbR^{d} \rightarrow \bbR^{s_t}$ and $h_T:\bbR^n \times \bbR^d \rightarrow  \bbR^{s_T}$ can also be included.

The decision variables of Equation~\ref{eq:coc} are the state and control trajectory $\xi \triangleq (x, u)$, whereas parameters $\theta$ are assumed to be fixed. Equation~\ref{eq:coc} can be interpreted as an inequality-constrained optimization problem with an additive cost structure and vector-valued constraints for the initial state $x_0$, dynamics, etc.~over all timesteps. Let $\xi(\theta) \triangleq (x^\star, u^\star)$ be an optimal solution to Equation \ref{eq:coc}. We can treat the optimal trajectory $\xi(\theta)$ as an \textit{implicit} function of parameters $\theta$, since Equation~\ref{eq:coc} can be solved to yield an optimal $\xi(\theta)$ for any (valid) $\theta$.

We can solve Equation~\ref{eq:coc} for the optimal trajectory $\xi(\theta)$ using a number of techniques. We can use general purpose solvers~\citep{snopt, ipopt}, as well as specialized solvers designed to exploit the structure of COC problems, e.g., ones based on differential dynamic programming~\citep{mastalli20crocoddyl, altro}. Regardless, for the purposes of computing analytical trajectory derivatives using our method (as well as methods that differentiate through optimality conditions~\citep{jin2020pontryagin, jin2021safe, amos2018differentiable}), we only need to ensure that our solver returns a vector $\xi(\theta)$ which is a (local) minimizer to Equation~\ref{eq:coc}. We now describe how to differentiate through these optimal control problems, important in the end-to-end learning context.


\section{Trajectory Derivatives using Implicit Differentiation}\label{sec:backward}

In this section, we present our identities for computing trajectory derivatives $\tD \xi(\theta)$ based on those derived in~\citet{gouldpami2022} for general optimization problems by leveraging first-order optimality conditions and the implicit function theorem. We exploit the block structure of the matrices in these identities that arises in optimal control problems to enable efficient computation and furthermore, show that computing trajectory derivatives is linear in the trajectory length $T$.

Before we present the main analytical result, recall the motivation for computing $\tD \xi(\theta)$. Suppose in the LfD context we have a demonstration trajectory $\xi^{\text{demo}}$ (we can extend this to multiple trajectories, but choose not to for notational simplicity). Furthermore, define a loss $\cL(\xi(\theta), \xi^{\text{demo}})$ which measures the deviation from predicted trajectory $\xi(\theta)$ to demonstration trajectory $\xi^{\text{demo}}$. Ultimately, if we wish to minimize the loss $\cL$ with respect to parameters $\theta$ using a first-order decent method, we need to compute $\tD_{\theta}\cL(\xi(\theta), \xi^{\text{demo}})$ by applying the chain rule. Specifically, we compute $\tD_\theta \cL(\xi) = \tD_{\xi} \cL(\xi) \tD \xi(\theta)$, which requires trajectory derivative $\tD \xi(\theta)$.

\subsection{Preliminaries}\label{ssec:backprelim}

First, we first reorder $\xi$ such that $\xi = (x_0, u_0, x_1, u_1, \ldots, x_T)\in\bbR^{n_\xi \times 1}$, where $n_\xi = (n+m)T+n$. This grouping of decision variable blocks w.r.t.~timesteps is essential for showing linear time complexity of the computation of the derivative. Let $\xi_t\in\bbR^{n+m}$ represent the subset of variables in $\xi$ associated to time $t$, with final state $\xi_T=x_T\in\bbR^{n}$. Next, we stack all constraints defined in Equation \ref{eq:coc} into a single vector-valued constraint $r$ comprised of $T+2$ blocks. Specifically, let $r(\xi; \theta) \triangleq (r_{-1}, r_0, r_1, \ldots, r_T ) \in \bbR^{n_r}$, where $n_r = (T+1)n + s+q$. The first block is given by $r_{-1} = x_0$, and subsequent blocks are given by
\begin{equation}\label{eq:constrblocks}
    r_t = \begin{cases}
        (\tilde{g}_t(x_t, u_t; \theta), h_t(x_t, u_t; \theta), x_{t+1} - f(x_t, u_t; \theta))
        & \text{for } 0 \leq t \leq T-1 \\
        (\tilde{g}_T(x_T; \theta), h_T(x_T; \theta) ) & \text{for } t = T.
    \end{cases}
\end{equation}
Here, $\tilde{g}_t$ are the subset of \textit{active} inequality constraints for $\xi$ (detected numerically using a threshold $\epsilon$) at timestep $t$. Note, $s = \sum_{t=0}^{T} s_t$, $q = \sum_{t=0}^{T} |\tilde{g}_t|$ are the total number of additional equality and active inequality constraints (on top of the $Tn$ dynamics and $n$ initial state constraints). Each block represents a group of constraints associated with a particular timestep (except the first block $r_{-1}$). As we will see shortly, grouping decision variables and constraints in this way will admit a favorable block-sparse matrix structure for cost/constraint Jacobians and Hessians required to compute $\tD \xi(\theta)$. 

\subsection{Analytical Results for Trajectory Derivatives}\label{ssec:analytical}

\begin{propn}[\methodname{}]\label{prop:ddn_eq}
Consider the optimization problem defined in Equation \ref{eq:coc}. Suppose $\xi(\theta)$ exists which minimizes Equation \ref{eq:coc}. Furthermore, assume $f_t$, $c_t$, $g_t$ and $h_t$ are twice differentiable for all $t$ in a neighborhood of~$(\theta, \xi)$. If $\text{rank}(A)=n_r$ and furthermore, $H$ is non-singular, then

\begin{equation}\label{eq:ddn_eq}
\tD \xi(\theta) = H^{-1}A^\top(AH^{-1}A^\top)^{-1}(AH^{-1}B - C) - H^{-1}B,
\end{equation}
where
\begin{align}
    A &= \tD_{\xi} r(\xi; \theta) \in \bbR^{n_r\times n_\xi} \nonumber \\
    B &= \tD_{\theta \xi}^2 J(\xi; \theta) - \sum_{t=-1}^{T}\sum_{i=1}^{|r_t|} \lambda_{t, i} 
    \tD^2_{\theta \xi} r_t(\xi; \theta)_i \in \bbR^{n_\xi \times d} \nonumber \\
    C &= \tD_\theta r(\xi; \theta) \in \bbR^{n_r \times d} \nonumber\\
    H &= \tD_{\xi \xi}^2 J(\xi; \theta) - \sum_{t=-1}^{T}\sum_{i=1}^{|r_t|} \lambda_{t, i} \tD^2_{\xi \xi} r_t(\xi; \theta)_i \in \bbR^{n_\xi \times n_\xi} \nonumber,
\end{align}
and Lagrange multipliers $\lambda\in\bbR^{n_r}$ satisfies $\lambda^\top A = \tD_{\xi} J(\xi; \theta)$.
\end{propn}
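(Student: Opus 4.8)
The plan is to reduce the active inequality constraints to equalities, write the first-order (KKT) optimality conditions for the resulting equality-constrained program, apply Dini's implicit function theorem to obtain a differentiable implicit solution $(\xi(\theta),\lambda(\theta))$, differentiate the optimality conditions with respect to $\theta$ to form the differential KKT system, and finally solve that system by block (Schur-complement) elimination of the multiplier derivative. This mirrors the equality-constrained identity of \citet{gouldpami2022}, specialized to the stacked constraint $r$ of Equation~\ref{eq:constrblocks}.

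First I would note that in a neighborhood of a local minimizer $\xi(\theta)$ the active set is exactly the constraints collected in $\tilde g_t$, so that (under strict complementarity, which keeps the active set locally constant as $\theta$ varies) the problem is equivalent to minimizing $J(\xi;\theta)$ subject to $r(\xi;\theta)=0$. The associated Lagrangian is $\mathcal{L}(\xi,\lambda;\theta)=J(\xi;\theta)-\lambda^\top r(\xi;\theta)$, and the first-order conditions are stationarity $\tD_\xi J(\xi;\theta)-\lambda^\top A=0$, i.e.~$\lambda^\top A=\tD_\xi J$ as stated, together with primal feasibility $r(\xi;\theta)=0$. The rank condition $\operatorname{rank}(A)=n_r$ is the linear independence constraint qualification and ensures $\lambda$ exists and is unique.

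Next I would stack stationarity and feasibility into a single map $F(\xi,\lambda;\theta)$ and invoke the implicit function theorem. The Jacobian of $F$ with respect to $(\xi,\lambda)$ is the block KKT matrix $\begin{pmatrix} H & -A^\top \\ A & 0 \end{pmatrix}$, which I would argue is nonsingular using $H$ nonsingular together with $A$ of full row rank; this licenses treating $(\xi(\theta),\lambda(\theta))$ as a differentiable function of $\theta$. Differentiating stationarity with respect to $\theta$, the pure second-order $\xi$ terms collapse into $H$, the mixed $\theta$--$\xi$ terms collapse into $B$, and the multiplier term contributes $-A^\top\tD\lambda$, giving $H\,\tD\xi - A^\top\,\tD\lambda = -B$; differentiating feasibility gives $A\,\tD\xi = -C$. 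Solving the first block for $\tD\xi = H^{-1}A^\top\,\tD\lambda - H^{-1}B$, substituting into the second to get $\tD\lambda = (AH^{-1}A^\top)^{-1}(AH^{-1}B - C)$, and back-substituting recovers Equation~\ref{eq:ddn_eq}.

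The main obstacle is the careful justification of the invertibility used in the elimination: I need $AH^{-1}A^\top$ to be nonsingular, which is where $H$ nonsingular and $\operatorname{rank}(A)=n_r$ must be used jointly (full row rank alone does not suffice without control on $H$). I would also need to be explicit about the implicit strict-complementarity assumption that pins the active set, since differentiating the active inequalities as equalities is only valid while the set $\tilde g_t$ does not change. Beyond these two points the argument is essentially bookkeeping: verifying that the Hessian-of-Lagrangian and constraint-Jacobian terms assemble into the stated $H$, $A$, $B$, and $C$, which is exactly the equality-constrained result of \citet{gouldpami2022} applied to the present $J$ and $r$.
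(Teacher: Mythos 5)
Your proposal is correct and takes essentially the same route as the paper, whose entire proof is a one-line citation to Proposition 4.5 of \citet{gouldpami2022}: your derivation (active-set reduction under strict complementarity, implicit function theorem applied to the stacked KKT conditions, then Schur-complement elimination of $\tD\lambda$) is precisely the argument behind that cited result, and matches the variable-elimination viewpoint the paper itself describes in Section~\ref{ssec:unification}. One caution on the point you flag as the main obstacle: nonsingular $H$ together with $\operatorname{rank}(A)=n_r$ does \emph{not} imply $AH^{-1}A^\top$ is invertible when $H$ is indefinite (e.g., $H=\mathrm{diag}(1,-1)$, $A=(1,\ 1)$ gives $AH^{-1}A^\top=0$), so that invertibility is an implicit hypothesis of the proposition (inherited from the cited result, and guaranteed in practice by second-order sufficiency, i.e., positive definiteness of $H$ on the null space of $A$) rather than something derivable ``jointly'' from the stated assumptions as your proposal suggests.
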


\begin{proof}
This is a direct application of Proposition 4.5 in~\citet{gouldpami2022}.
\end{proof}

\begin{rmk}\label{rmk:blockH}
    Matrix $H$ is block diagonal with $T+1$ blocks.
\end{rmk}

To see this, $\hat{H} = \tD_{\xi \xi}^2 J(\xi; \theta)$ has a block diagonal structure with blocks given by $\tD_{\xi_t \xi_t}^2 c_t(\xi_t; \theta)$, which follows from the assumption of additive costs in the COC problem described in Equation~\ref{eq:coc}. In addition, constraints $g_t$, $h_t$ depend only on $\xi_t$ and furthermore, the dynamics constraint is first order in $\xi_{t+1}$. Therefore, $\tD^2_{\xi \xi} r_t(\xi; \theta)_i$ is only non-zero in the block relating to $\xi_t$ for all $t$ and $i$. We plot the block-sparsity structure of $H$ in Figure~\ref{fig:h_mat}.

\begin{rmk}\label{rmk:blockbiA}
Matrix $A$ is a block-banded matrix that is two blocks wide.
\end{rmk}

This is shown by noting that the only non-zero blocks in $A$ correspond to $\tD_{\xi_t} r_t(\xi; \theta)$ and $\tD_{\xi_{t+1}} r_t(\xi; \theta)$. The former relates to $g_t$, $h_t$ and the $-f_t(\xi_t; \theta)$ component of the dynamics constraint. The latter only relates to the $x_{t+1}$ component of the dynamics constraint. Finally, the initial condition block $r_{-1}$ only depends on $\xi_0$, hence the only non-zero block is given by $\tD_{\xi_0} r_{-1}(\xi_0; \theta)$. We plot the block-sparsity structure of $A$ in Figure~\ref{fig:a_mat}.

\begin{propn}\label{prop:ddn_eq_linear}
    Evaluating Equation \ref{eq:ddn_eq} has $O(T)$ time complexity.
\end{propn}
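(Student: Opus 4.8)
The plan is to show that Equation~\ref{eq:ddn_eq} can be evaluated by a sequence of stages, each costing $O(T)$, provided we never materialize any matrix densely and instead exploit the block-sparsity established in Remarks~\ref{rmk:blockH} and~\ref{rmk:blockbiA}. The key point is that both $n_\xi = (n+m)T + n$ and $n_r = (T+1)n + s + q$ grow linearly in $T$, so a naive dense evaluation would be superlinear (cubic in $T$ for the inversions); the content of the proposition is that the structure collapses this to linear cost. Throughout I would treat the block dimensions $n$, $m$ and the per-timestep constraint counts $|r_t|$ as constants independent of $T$, so that each individual block operation is $O(1)$ and total cost is governed by the number of blocks.

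First I would handle the factors involving $H$. By Remark~\ref{rmk:blockH}, $H$ is block diagonal with $T+1$ blocks, so $H^{-1}$ is obtained by inverting each block independently at $O(T)$ total cost, and $H^{-1}$ retains the block-diagonal pattern. Consequently $H^{-1}B$ (with $B$ having conformable row-blocks) and $H^{-1}A^\top$ are each computed in $O(T)$, and crucially $H^{-1}A^\top$ inherits the two-block-wide banded pattern of $A^\top$, since left-multiplication by a block-diagonal matrix merely rescales row-blocks and cannot widen the support.

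Next I would analyze $M \triangleq A H^{-1} A^\top \in \bbR^{n_r \times n_r}$, the only matrix whose inverse appears. Indexing $M$ by constraint blocks $t_1, t_2 \in \{-1, 0, \ldots, T\}$, its $(t_1, t_2)$ block equals $\sum_{t'} A_{t_1, t'} H^{-1}_{t', t'} (A_{t_2, t'})^\top$, which is non-zero only when constraint blocks $t_1$ and $t_2$ act on a common decision block $\xi_{t'}$. By Remark~\ref{rmk:blockbiA}, constraint block $t$ touches only decision blocks $t$ and $t+1$, so two constraint blocks share a decision block precisely when their indices differ by at most one. Hence $M$ is block-tridiagonal with blocks of bounded size and can be assembled in $O(T)$. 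Rather than forming $(AH^{-1}A^\top)^{-1}$ explicitly (a dense $O(T) \times O(T)$ object), I would solve the block-tridiagonal system $M Z = A H^{-1} B - C$ for the $n_r \times d$ matrix $Z$ via a block forward/backward sweep (block Thomas, equivalently a block-$LU$ factorization that preserves bandwidth), costing $O(T)$ since there are $O(T)$ blocks and $d$ is fixed; the right-hand side $A H^{-1} B - C$ is itself assembled in $O(T)$ from the banded and block-diagonal factors already available.

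Finally, I would recover $\tD \xi(\theta) = H^{-1}A^\top Z - H^{-1}B$, a product of a block-banded matrix with $Z$ plus an already-computed term, again $O(T)$; summing the costs of all stages yields the claim. The hard part will be the third step: rigorously establishing the block-tridiagonal structure of $M$ and arguing that the structure-exploiting solve is genuinely $O(T)$. This rests on (i) the support-overlap argument that bounds the bandwidth independently of $T$, and (ii) the assumption that the per-timestep block dimensions (including the number of active inequality constraints $|\tilde g_t|$) are bounded, so each block operation is constant-time. The boundary blocks $r_{-1}$ and $r_T$ break the uniform pattern and must be checked separately, but they contribute only $O(1)$ overhead.
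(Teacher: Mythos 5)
Your proposal is correct and follows essentially the same route as the paper's proof: exploit the block-diagonal structure of $H$ and the block-banded structure of $A$, observe that $AH^{-1}A^\top$ is block tridiagonal, and solve the resulting system with a linear-time block tridiagonal (block Thomas) solver, with all remaining products costing $O(T)$ blockwise. Your write-up is in fact slightly more careful than the paper's in two respects — you spell out the support-overlap argument for the tridiagonal structure of $AH^{-1}A^\top$ and you keep the $-C$ term in the right-hand side of the solve — but these are refinements of the same argument, not a different one.
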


\begin{proof}
    From Remarks 1 and 2, $H$ and $A$ have a block diagonal and block-banded structure with $T+1$ and $2T+2$ blocks, respectively. Therefore, we can evaluate $H^{-1}A^\top$ in $O(T)$ time and furthermore, $H^{-1}A^\top$ has the same structure as $A$. It follows that we can also evaluate $AH^{-1}B - C$ in $O(T)$ time after partitioning $B$ and $C$ into blocks based on groupings of $\xi$ and $r$.
    
    Next, observe that $AH^{-1}A^\top$ yields a block tridiagonal matrix with $T+2$ blocks on the main diagonal (see Figure~\ref{fig:ahinvat} for a visualization of the block-sparse structure). As a result, we can evaluate $(AH^{-1}A^\top)^{-1} (AH^{-1}B)$ by solving the linear system $(AH^{-1}A^\top) Y = AH^{-1}B$ for $Y$ in $O(T)$ time using any linear time (w.r.t.~number of diagonal blocks) block tridiagonal linear solver. A simple example is the block tridiagonal matrix algorithm, easily derived from block Gaussian elimination.

    Finally, $H^{-1}B$ can easily be evaluated in $O(T)$ time by solving $T+1$ linear systems comprised of the blocks of $H$ and $B$. We conclude that evaluating Equation~\ref{eq:ddn_eq} requires $O(T)$ operations.
\end{proof}

\begin{figure}[t!]
     \centering
     \begin{subfigure}[b]{0.22\textwidth}
         \centering
         \includegraphics[width=\textwidth]{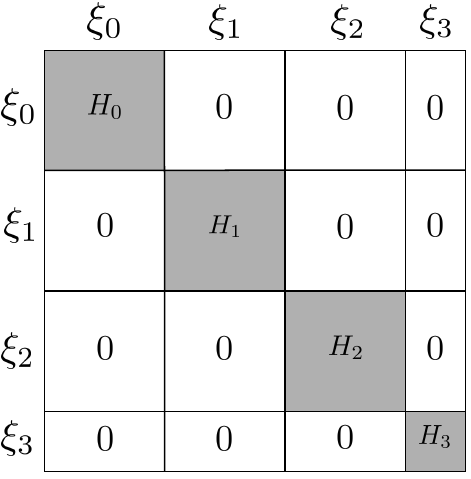}
         \caption{Matrices $H$ and $H^{-1}$}
         \label{fig:h_mat}
     \end{subfigure}
     \hfill
     \begin{subfigure}[b]{0.22\textwidth}
         \centering
         \includegraphics[width=\textwidth]{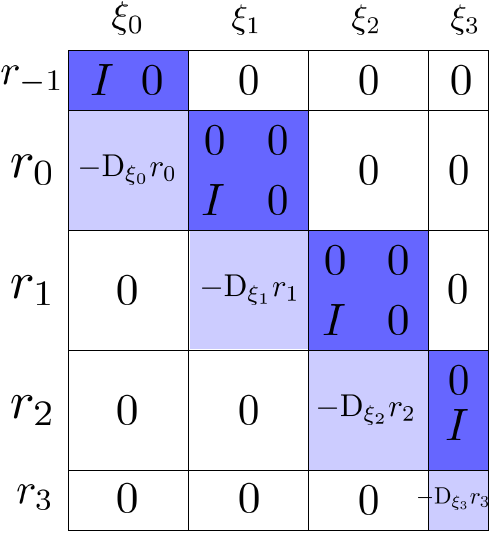}
         \caption{Matrix $A$}
         \label{fig:a_mat}
     \end{subfigure}
     \hfill
     \begin{subfigure}[b]{0.22\textwidth}
         \centering
         \includegraphics[width=\textwidth]{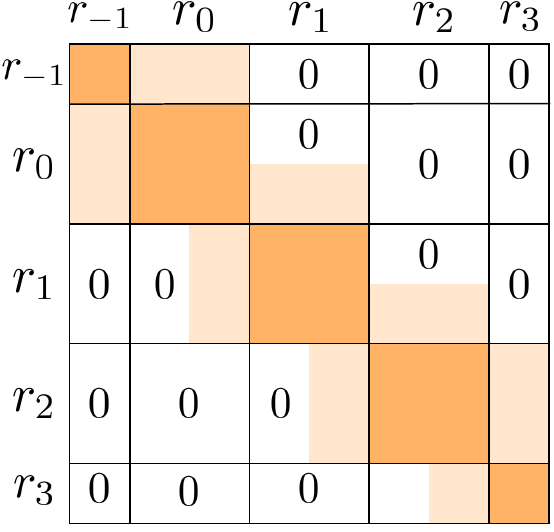}
         \caption{Matrix $AH^{-1}A^\top$}
         \label{fig:ahinvat}
     \end{subfigure}
    \caption{Block-sparse structure for matrices $H$, $A$, and $AH^{-1}A^\top$ assuming $T=3$. For $A$, we combined inequality and equality constraint groups $g_t$ and $h_t$ into a single group for simplicity. Shaded regions indicate (possible) non-zero blocks.}
    \label{fig:blockmat}
\end{figure}

\subsection{Comparison to DiffMPC and PDP} \label{ssec:unification}

All existing methods for computing analytical trajectory derivatives involve solving the linear system
\begin{equation}\label{eq:kktsystem}
    \underbrace{\begin{bmatrix}
    H & A^\top \\
    A & 0
    \end{bmatrix} }_{K}
    \begin{bmatrix}
    \tD\xi \\ -\tD\lambda
    \end{bmatrix} = 
    \begin{bmatrix}
        -B \\
        -C
    \end{bmatrix},
\end{equation}
where blocks $H, A, B, C$ are defined in Section~\ref{ssec:analytical} and $\tD\xi$ is the desired trajectory derivative. We call Equation~\ref{eq:kktsystem} the (differential) \textit{KKT system} and the matrix $K$ the (differential) \textit{KKT matrix}. Note, $\tD\xi$ is unique if $K$ is non-singular\footnote{See Section 10.1 in~\citep{boyd_vandenberghe_2004} for further discussion on conditions for non-singular $K$}. To better contextualize \methodname{}, we now briefly compare and contrast how previous methods such as DiffMPC~\cite{amos2018differentiable} and PDP~\cite{jin2020pontryagin, jin2021safe} solve this KKT system.

\paragraph{DiffMPC.} DiffMPC~\cite{amos2018differentiable} proposes a method for differentiating through Linear Quadratic Regulator (LQR) problems, which are OC problems where the cost function is (convex) quadratic and the dynamics are affine. The authors show that solving the LQR problem using matrix Riccati equations can be viewed as an efficient method for solving a KKT system which encodes the optimality conditions. In addition, they show that computing trajectory derivatives involves solving a similar KKT system, motivating efficient computation by solving an auxiliary LQR problem in the backward pass. The matrix equation interpretation of the backward pass allows the efficient computation of VJPs for some downstream loss $\cL(\xi)$ in a bi-level optimization context. 

DiffMPC extends to handling non-convex OC problems with box constraints on control inputs via an iterative LQR-based approach. Specificially, such problems are handled by first computing VJPs w.r.t.~the parameters of an LQR approximation to the non-convex problem around the optimal trajectory $\xi, \lambda$. Next, the parameters of the approximation are differentiated w.r.t.~the underlying parameters $\theta$ and combined using the chain rule. However, differentiating the quadratic cost approximation and dynamics requires evaluating costly higher-order derivatives, e.g.,~$\frac{d}{d\theta} \frac{d^2 c(\xi)}{d\xi^2}$, which are 3D tensors. These tensors are dense in general, although problem specific sparsity structures may exist.

\paragraph{PDP/Safe-PDP.} PDP~\cite{jin2020pontryagin} and its extension, Safe-PDP~\cite{jin2021safe}, take a similar approach to DiffMPC in deriving trajectory derivatives. PDP derives trajectory derivatives by starting with PMP, which is well-known in the control community and applies to non-convex OC problems. Furthermore,~\citet{jin2020pontryagin} shows that the PMP and KKT conditions are equivalent in the discrete-time COC setting. Trajectory derivatives are obtained by differentiating the PMP conditions, yielding a new set of PMP conditions for an auxiliary LQR problem, which can be solved using matrix Riccati equations. This approach is fundamentally identical to DiffMPC, with only superficial differences for LQR problems (solving a differential KKT system versus differentiating PMP conditions)\footnote{Jin et al.,~\cite{jin2020pontryagin} claim that the backward pass for DiffMPC is $O(T^2)$ due to direct inversion of the KKT matrix. However, DiffMPC actually solves an auxiliary LQR problem for $O(T)$ complexity.}. 

For non-convex problems, PDP does not evaluate higher-order derivatives, unlike DiffMPC. However, it is not clear how to compute VJPs under approaches derived using the PDP framework. Safe-PDP extended PDP to handle arbitrary, smooth inequality constraints using the constrained PMP conditions, which extends the capability of DiffMPC to handle box constraints. Trajectory derivatives are computed by solving an auxiliary equality constrained LQR problem. Experiments in both~\citet{jin2021safe} and Section~\ref{sec:experiments} show that Safe-PDP yields unreliable derivatives for COC problems. While~\citet{jin2021safe} conjected that instability was due to the set of active constraints changing between iterations, we find that \methodname{} still learns reliably in the presence of constraint switching. We instead observed that the poor gradient quality arises naturally from the equality constrained LQR solver used for the backward pass~\cite{laineeqlqr} not matching the solution obtained by directly solving the KKT system. 

\paragraph{\methodname{}.} In contrast, we solve Equation~\ref{eq:kktsystem} for $\tD\xi$ by first applying variable elimination on $\tD\lambda$, yielding Proposition~\ref{prop:ddn_eq}.
We will discuss in Section~\ref{sec:algo} how the equations resulting from this approach admit parallelization and favorable numerical stability compared to DiffMPC and PDP's auxiliary LQR approach. Furthermore, VJPs can be easily computed (unlike PDP) without evaluating higher-order derivatives (unlike DiffMPC). \methodname{} handles arbitrary smooth constraints like Safe-PDP, and we show in Section~\ref{sec:experiments} that our derivatives for COC problems are reliable during training. However, we require the additional assumption that $H$ is non-singular (which holds if and only if all blocks in $H$ are non-singular), which is not required for $K$ to be non-singular. An oftentimes effective solution when $H$ is singular, initially proposed by~\citet{russellimplicit}, is to set $H = H + \frac{\delta}{2}I$ for small $\delta$, which is analogous to adding a proximal term to the cost function in Equation~\ref{eq:coc}. We will now describe how to evaluate Equation~\ref{eq:ddn_eq} in linear time by exploiting the block-sparse structure of the matrix equation.

\section{Algorithmic Implications of \methodname{}}\label{sec:algo}

\subsection{Parallelization and Numerical Stability} 

\paragraph{Parallelization.} To evaluate Equation \ref{eq:ddn_eq}, we can leverage the block diagonal structure of $H$ and block-banded structure of $A$ to compute $H^{-1}A^\top$ and $H^{-1}B$ in parallel across all timesteps. Specifically, we evaluate the matrix product block-wise, e.g.,~$H_t^{-1} B_t$ for all $t$ in parallel (similarly with $A$). Following this, we can then compute $A(H^{-1}A^\top)$ in parallel also using the same argument.

In addition, we can solve the block tridiagonal linear systems involving $AH^{-1}A^\top$ by using a specialized parallel solver~\cite{bevilparallel, spikepolizzi, manycore, pcr_osti_13, hajjmultilevelbt, bcyclic}. Unfortunately, none of these methods have open source code available, and so in our experiments, we implement the simple block tridiagonal matrix algorithm. Implementing a robust parallel solver is an important direction for future work, and will further improve the scalability and numerical stability of \methodname{}.

\paragraph{Numerical Stability.} In addition to parallelization, another benefit of avoiding Riccati-style recursions for computing derivatives is improved numerical stability. For long trajectories and/or poorly conditioned COC problems (e.g.,~stiff dynamics), \methodname{} reduces the rounding errors that accumulate in recursive approaches. We show superior numerical stability in our experiments compared to PDP in Section~\ref{sec:experiments}, despite using the simple recursive block tridiagonal matrix algorithm for solving $AH^{-1}A^\top$. Replacing this recursion with a more sophisticated block tridiagonal solver should further improve the stability of the backwards pass and is left as future work.

\subsection{Vector Jacobian Products} 

Another benefit of explicitly writing out the matrix equations for $\tD \xi(\theta)$ given in Equation~\ref{eq:ddn_eq} is that we can now directly compute VJPs given some outer loss $\cL(\xi)\in\bbR$ over the optimal trajectory. Let $v \triangleq \tD_{\xi} \cL(\xi)^\top \in\bbR^{n_\xi\times 1}$ be the gradient of the loss w.r.t.~trajectory $\xi(\theta)$. The desired gradient $\tD_\theta \cL(\xi(\theta))$ is then given by $\tD_\theta \cL(\xi) = v^\top  \tD \xi(\theta)$ using the chain rule. The resultant expression is
\begin{equation}\label{eq:ddn_eq_vjp}
\tD_\theta \cL(\xi(\theta)) = v^\top (H^{-1}A^\top(AH^{-1}A^\top)^{-1}(AH^{-1}B - C) - H^{-1}B).
\end{equation}
The simple observation here is that we do not need to construct $D_{\theta}\xi^\star(\theta)$ explicitly, and can instead evaluate the VJP directly. We propose evaluating Equation \ref{eq:ddn_eq_vjp} from left to right and block-wise, which will reduce computation time compared to explicitly constructing $\tD \xi(\theta)$ and then multiplying with $v$.

To see this, we follow the example in~\citet{gouldpami2022} and assume the blocks in $H$ have been factored. Then for a single block (ignoring constraints for simplicity), evaluating $v^\top (H^{-1}_tB_t)$ is $O((n+m)^2 p)$ while evaluating $(v^\top H^{-1}_t)B_t$ is $O((n+m)^2 + (n+m)p)$. Evaluating the VJP directly significantly reduces computation time compared to constructing the full trajectory derivative for problems with higher numbers of state/control variables and tunable parameters. 



\section{Experiments}\label{sec:experiments}

While our contributions are largely analytical, we have implemented the identities in Equation~\ref{eq:ddn_eq} to verify our claims around numerical stability and computational efficiency on a number of simulated COC environments. We will show that \methodname{} is able to compute trajectory derivatives significantly faster compared to its direct alternative PDP~\citep{jin2020pontryagin} and Safe-PDP~\citep{jin2021safe} with superior numerical stability.

\begin{figure}[t!]
     \centering
     \begin{subfigure}[b]{0.35\textwidth}
         \centering
         \includegraphics[width=\textwidth]{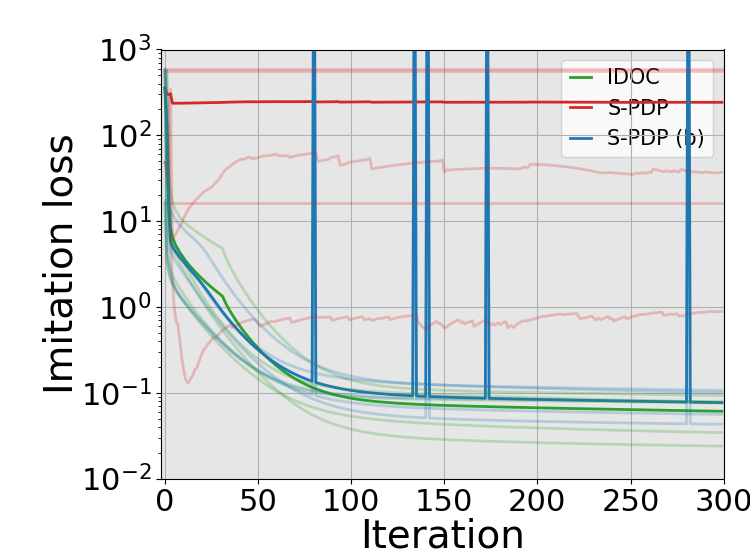}
         \caption{Cartpole}
         \label{fig:cartpoleineq}
     \end{subfigure}
     \hspace{1.5cm}
     \begin{subfigure}[b]{0.35\textwidth}
         \centering
         \includegraphics[width=\textwidth]{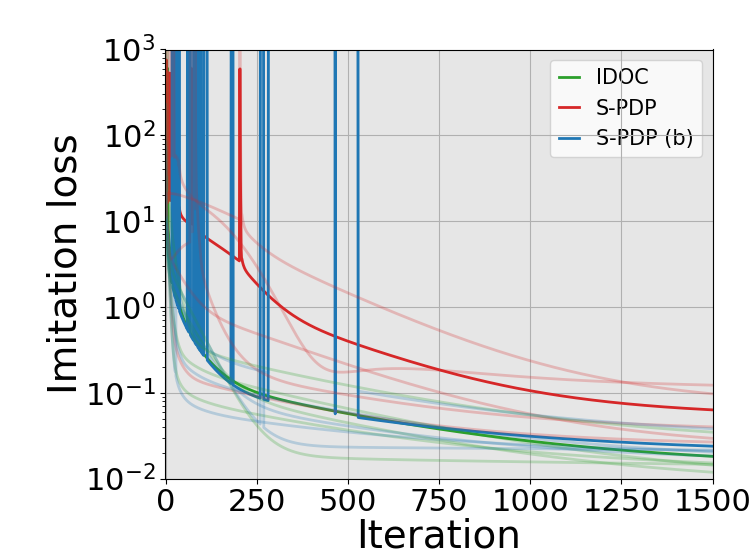}
         \caption{6-DoF Quadrotor}
         \label{fig:quadrotorineq}
     \end{subfigure}
     
     \begin{subfigure}[b]{0.35\textwidth}
         \centering
         \includegraphics[width=\textwidth]{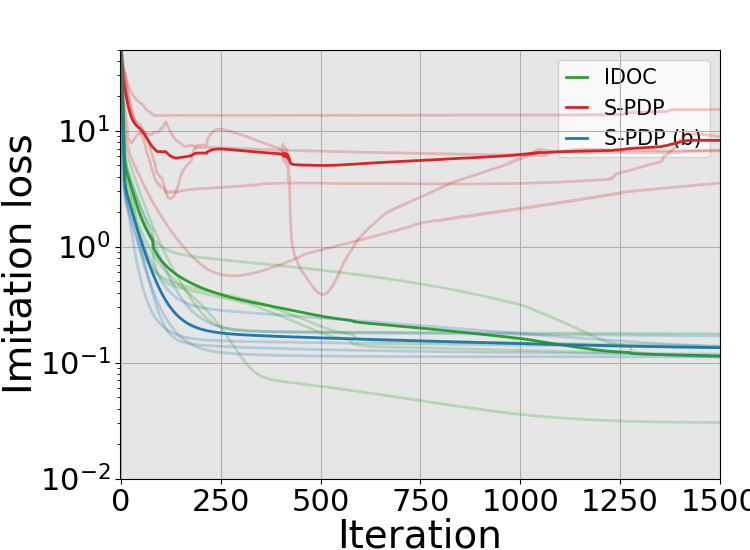}
         \caption{Robotarm}
         \label{fig:robotarmineq}
     \end{subfigure}
     \hspace{1.5cm}
    \begin{subfigure}[b]{0.35\textwidth}
         \centering
         \includegraphics[width=\textwidth]{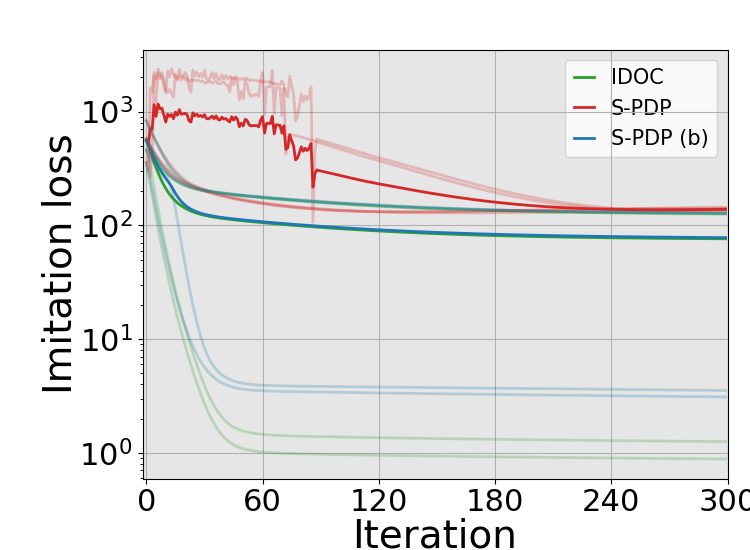}
         \caption{6-DoF Rocket}
         \label{fig:rocketineq}
     \end{subfigure}
    \caption{Learning curves for the imitation learning task over five trials. Bold lines represent mean loss, lighter lines represent individual trials. \methodname{} yields more stable gradients and lower final imitation loss across all environments and trials compared to both Safe-PDP (S-PDP) and Safe-PDP with log-barrier functions (S-PDP (b)).  }
        \label{fig:ineqexper}
\end{figure}

\begin{figure}[t!]
     \centering
     \begin{subfigure}[b]{0.32\textwidth}
         \centering
         \includegraphics[width=\textwidth]{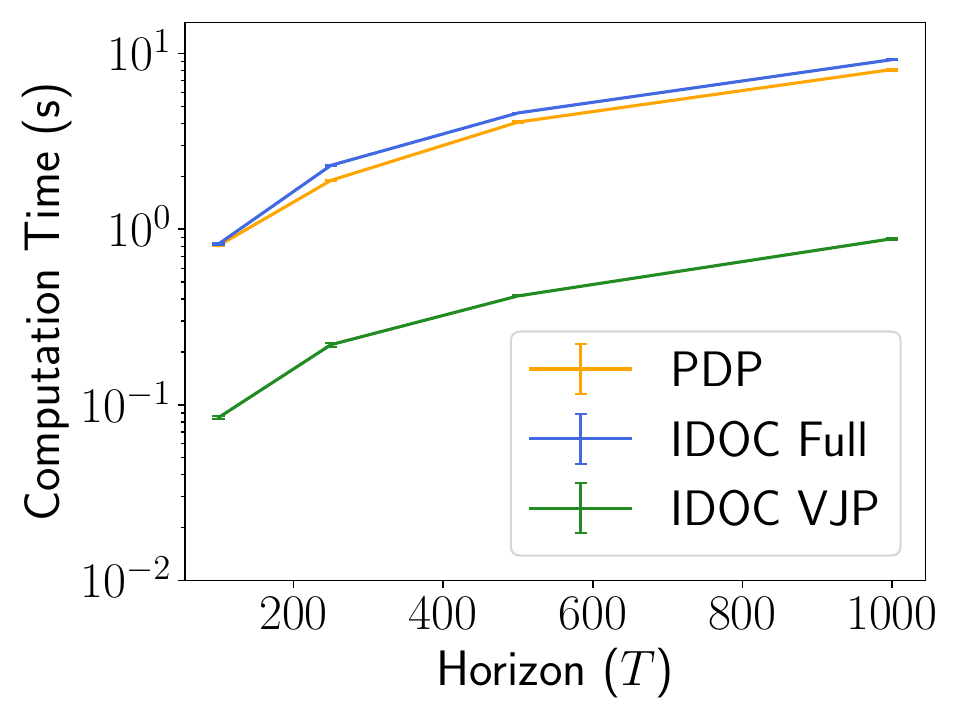}
         \caption{Horizon length scalability.}
         \label{fig:synthT}
     \end{subfigure}
     \begin{subfigure}[b]{0.32\textwidth}
         \centering
         \includegraphics[width=\textwidth]{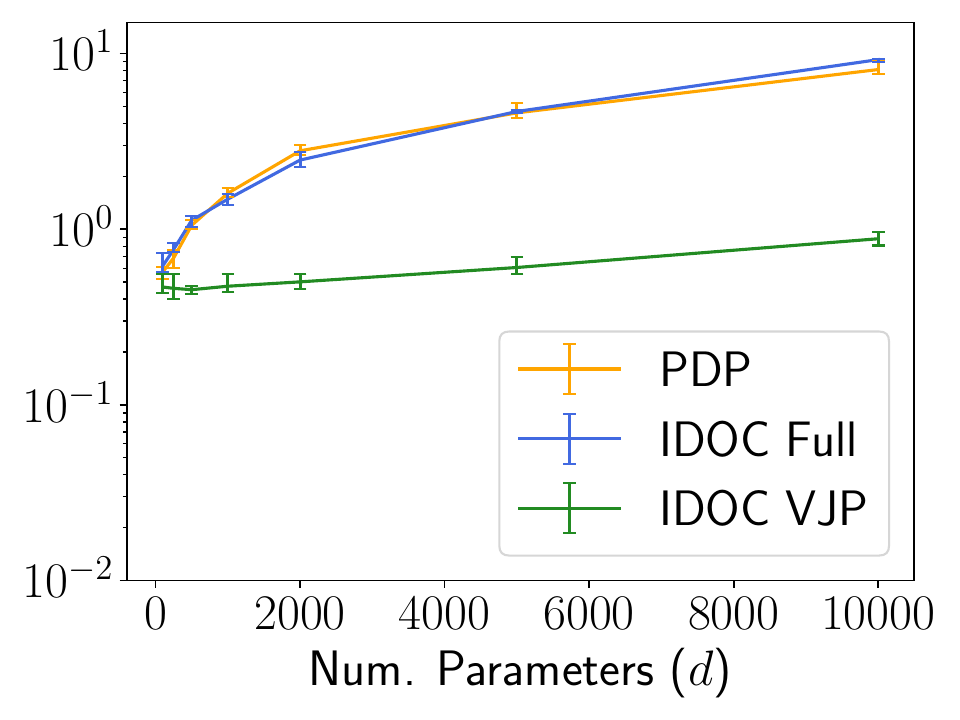}
         \caption{Model size scalability.}
         \label{fig:synthd}
     \end{subfigure}
     \begin{subfigure}[b]{0.32\textwidth}
         \centering
         \includegraphics[width=\textwidth]{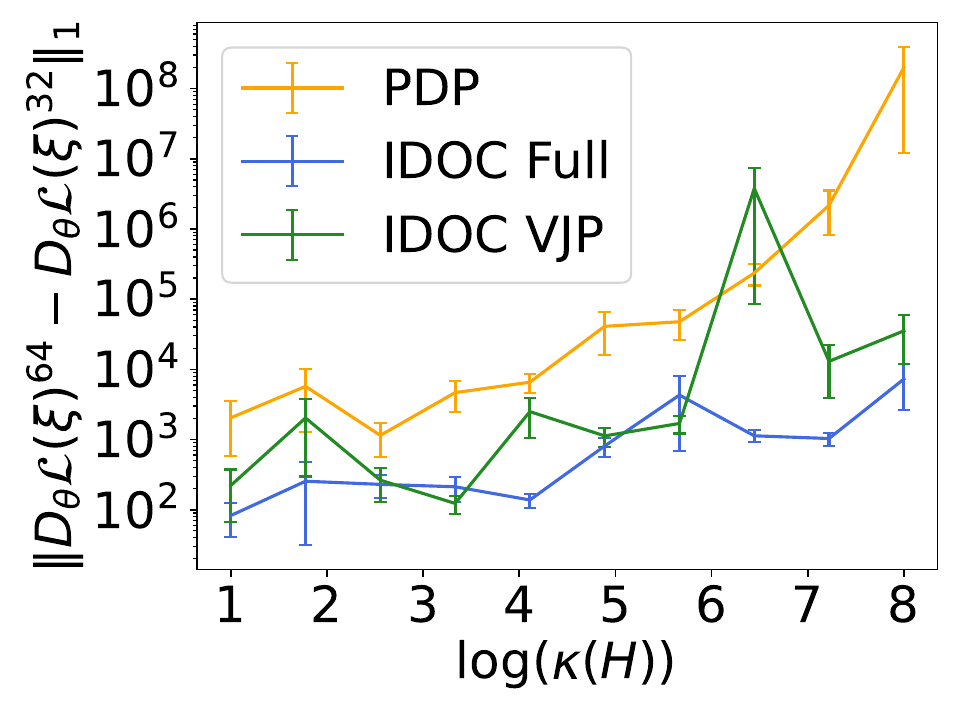}
         \caption{Numerical stability.}
         \label{fig:synthstability}
     \end{subfigure}
    \caption{Synthetic experiments measuring a) scalability with horizon length $T$, with fixed parameter size $|\theta|=10$k, b) scalability with number of parameters $d$ with fixed horizon length $T=1$k, c) numerical stability with varying block condition number $\kappa(H_t)$. Error bars measure standard error across 5 and 25 samples for computation time (a, b) and numerical stability (c), respectively. }
        \label{fig:synth}
\end{figure}

\subsection{Experimental Setup}\label{ssec:expersetup}

We evaluate \methodname{} against PDP~\cite{jin2020pontryagin} and Safe-PDP~\cite{jin2021safe} in an LfD setting across four simulation environments proposed in~\citet{jin2021safe}, as well as a synthetic experiment. The simulation environments showcase the gradient quality for a realistic learning task, whereas the synthetic experiment is designed to measure numerical stability and computation times. For Safe-PDP, we evaluate against the method proposed for inequality constrained problems (Safe-PDP), as well as using an approximate log-barrier problem in place of the full problem (Safe-PDP (b)). The IPOPT solver~\cite{ipopt} is used in the forward pass to solve the COC problem, and Lagrange multipliers $\lambda$ are extracted from the solver output. More generally however, note that the method proposed in~\citet{gouldpami2022} can be used to recover $\lambda$ if another solver is used where $\lambda$ is not provided.

\paragraph{Imitation Learning/LfD.} The LfD setting involves recovering the model parameters $\theta$ that minimizes the mean-squared imitation error to a set of $N$ demonstration trajectories, given by $\cL(\xi(\theta), \xi^{\text{demo}}) \triangleq \frac{1}{N}\sum_i \|\xi(\theta)_i - \xi^{\text{demo}}_i\|^2$. We include all parameters in the cost, dynamics and constraint functions in $\theta$ and furthermore, evaluate performance both with (S-PDP) and without (PDP) inequality constraints. We perform experiments in four standard simulated environments: cartpole, 6-DoF quadrotor\footnote{Interestingly, $H$ is singular in this case, relating to block $H_T$, so we use the trick proposed in Section~\ref{ssec:unification} with $\delta=10^{-6}$ to compute Equation~\ref{eq:ddn_eq}. More details around this are provided in the appendix.}, 2-link robot arm and 6-DoF rocket landing. For a detailed description of the imitation learning problem as well as each COC task, see the appendix. In addition, we provide timings for all methods within the simulation environments in the appendix.

\paragraph{Synthetic Benchmark.} The simulation experiments described above are not sufficiently large-scale to adequately measure the benefits of parallelization and numerical stability afforded by \methodname{} over PDP. To demonstrate these benefits, we constructed a large-scale, synthetic experiment where the blocks required to construct $H, A, B$ and $C$ (as discussed in Section~\ref{sec:backward}) are generated randomly. Computation time against the horizon length $T$, as well as the number of parameters $d$ is reported. Numerical stability is measured by comparing mean-absolute error between trajectory derivatives evaluated under 32-bit and 64-bit precision for varying condition numbers over the blocks of $H$. State and control dimensions are fixed at $n=50$ and $m=10$. Experiments are run on an AMD Ryzen 9 7900X 12-core processor, 128Gb RAM and Ubuntu 20.04. See the appendix for more details.


\subsection{Imitation Learning/LfD Results}\label{ssec:quality}

\begin{wrapfigure}{r}{0.34\textwidth}
    \includegraphics[width=0.34\textwidth]{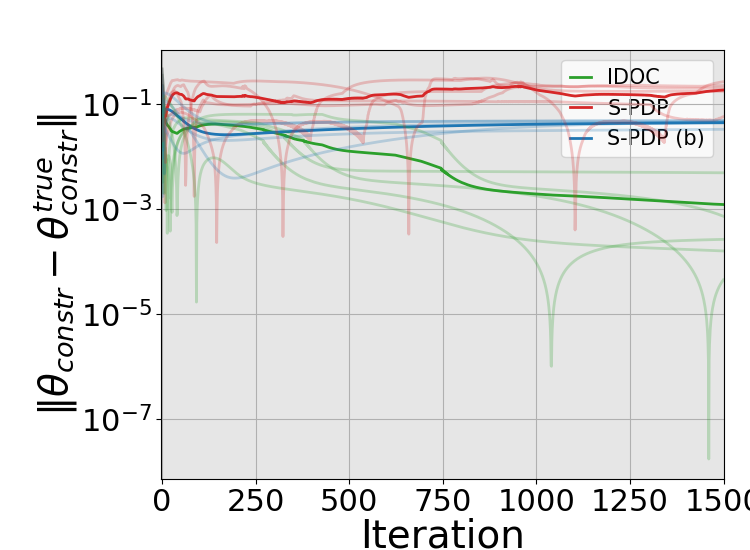}
    \caption{Control Limit Learning.}
    \label{fig:constrerr}
\end{wrapfigure}

In this section, we evaluate the effectiveness of gradients produced from \methodname{} against PDP and Safe-PDP~\citep{jin2020pontryagin, jin2021safe} for the imitation learning/LfD task. In the equality constrained setting, \methodname{} and PDP yield identical (up to machine precision) trajectory derivatives and imitation loss throughout learning; see the appendix for more detailed results and analysis. However, as shown in Figure~\ref{fig:ineqexper}, when inequality constraints are introduced, the derivatives produced by Safe-PDP and \methodname{} differ significantly. Safe-PDP fails to reduce the imitation loss due to unreliable gradients, whereas \methodname{} successively decreases the imitation loss. While Safe-PDP (b), which differentiates through a log-barrier problem, also provides stable learning, it ultimately yields higher imitation loss compared to \methodname{}. This is because the barrier problem is an approximation to the true COC problem. Further discussion around using log-barrier methods and COC problems is provided in Section~\ref{ssec:avoidbarrier}.




\subsection{Synthetic Benchmark Results}

Figure~\ref{fig:synth} illustrates the results of the synthetic benchmark. We observed that \methodname{} and PDP have similar computation time and scalability with problem size when computing full trajectory derivatives. However, computing VJPs with \methodname{} significantly reduces computation time and improves scalability by an order of magnitude with model size (i.e., $d=|\theta|$). Numerical stability experiments show that \methodname{} (full and VJP) yields lower round-off errors and improved stability compared to PDP. Using more sophisticated block tridiagonal solvers will further improve stability.

\section{Discussion and Future Work}

\subsection{Differentiating through Log-Barrier Methods}\label{ssec:avoidbarrier}

In the imitation learning setting, we assume that demonstration data $\xi^\text{demo}$ is an optimal solution to a COC problem with (unknown) parameters $\theta^\star$. Furthermore, we assume that a subset of timesteps $\mathcal{A} \subseteq \{1, \dots, T\}$ yield active constraints. Under the log-barrier formulation, the constraint boundaries must be relaxed beyond their true values during learning to minimize the imitation loss. Concretely, under $\theta^\star$, $\log g_t(x_t^\text{demo}, u_t^\text{demo}; \theta^\star)$ are undefined for $t\in\mathcal{A}$. Therefore to recover $\xi(\theta) = \xi^\text{demo}$, we must have $g_t(x_t(\theta), u_t(\theta)) < 0$ for $t \in \mathcal{A}$, i.e., we cannot recover the true constraint function through minimizing the imitation loss. We verify this using the robot arm environment, and present the results in Figure~\ref{fig:constrerr}, plotting the error between the estimated and true constraint value. We observe that \methodname{} recovers the true constraint value more closely compared to Safe-PDP (b).

\paragraph{Generality of IFT.}

The generality of the differentiable optimization framework, and the matrix equation formulation for trajectory derivatives given in Equation~\ref{eq:ddn_eq} yield additional conceptual benefits which may help us tackle even broader classes of COC problems. For example, we can relax the additive cost assumption and add a final cost defined over the full trajectory such as
\begin{equation}
    h_T(\xi; \theta) = \xi^\top Q \xi,
\end{equation}
where $Q=CC^\top$ and $C\in\bbR^{n_\xi \times k}$ for $k \ll n_\xi$ is full rank. For \methodname{}, we can simply use the matrix inversion lemma to invert $H$ in $O(T)$. However, this is more complicated for methods derived from the PMP which rely on the specific additive cost structure of the underlying COC problem.

\paragraph{Dynamic Games.} A generalization of our work is to apply the differentiable optimization framework to handle learning problems that arise in dynamic games \cite{basar_dynamic_1982} and have recently begun to employ PDP-based approaches \cite{Cao2022}. Being able to differentiate through solution and equilibrium concepts that arise in dynamic games enables the solution of problems ranging from minimax robust control \cite{basar_dynamic_1982} to inverse dynamic games \cite{Cao2022,Molloy2022}, and is a promising direction for future work.

\paragraph{Fast forward passes.} While we have proposed a more efficient way of computing trajectory derivatives (i.e., the backward pass), we observe that the forward pass to solve the COC problem bottlenecks the learning process. Significant effort must be placed on developing fast solvers for COC problems for hardware accelerators to allow learning to scale to larger problem sizes.

\paragraph{Combining MPC and Deep Learning.} A recent application for computing trajectory derivatives is combining ideas from deep learning (including reinforcement learning) with MPC~\cite{romero2023actorcritic, zanonsaferlmpc, xiaobarriernet, Yang-RSS-23}. Common to these approaches is using a learned model to select the parameters for the MPC controller. By allowing for robust differentiation through a broader class COC problems compared to previous approaches, we hope that \methodname{} will allow for future development in this avenue of research.

\section{Conclusion}

In this paper, we present \methodname{}, a novel approach to differentiating through COC problems. Trajectory derivatives are evaluated by differentiating KKT conditions and using the implicit function theorem. Contrary to prior works, we do not solve an auxiliary LQR problem to efficiently solve the (differential) KKT system for the trajectory derivative. Instead, we apply variable elimination on the KKT system and solve the resultant matrix equations directly. We show that linear time evaluation is possible by appropriately considering equation structure. In fact, we show that \methodname{} is faster and more numerically stable in practice compared to methods derived from the PMP. We hope that our discussion connecting the fields of inverse optimal control and differentiable optimization will lead to future work which enables differentiability of a broader class of COC problems.

\bibliography{neurips_2023}

\begin{thebibliography}{51}
\providecommand{\natexlab}[1]{#1}
\providecommand{\url}[1]{\texttt{#1}}
\expandafter\ifx\csname urlstyle\endcsname\relax
  \providecommand{\doi}[1]{doi: #1}\else
  \providecommand{\doi}{doi: \begingroup \urlstyle{rm}\Url}\fi

\bibitem[Agrawal et~al.(2019)Agrawal, Amos, Barratt, Boyd, Diamond, and
  Kolter]{cvxpylayers2019}
A.~Agrawal, B.~Amos, S.~Barratt, S.~Boyd, S.~Diamond, and Z.~Kolter.
\newblock Differentiable convex optimization layers.
\newblock In \emph{Advances in Neural Information Processing Systems}, 2019.

\bibitem[Amos et~al.(2018)Amos, Jimenez, Sacks, Boots, and
  Kolter]{amos2018differentiable}
B.~Amos, I.~Jimenez, J.~Sacks, B.~Boots, and J.~Z. Kolter.
\newblock {Differentiable MPC for End-to-end Planning and Control}.
\newblock \emph{{Advances in Neural Information Processing Systems}}, 2018.

\bibitem[Ba{\c s}ar and Olsder(1982)]{basar_dynamic_1982}
T.~Ba{\c s}ar and G.~J. Olsder.
\newblock \emph{Dynamic noncooperative game theory}.
\newblock Academic Press, London/New York, 1982.
\newblock ISBN 978-0-08-095666-4.

\bibitem[Bertsekas(2019)]{Bertsekas2019}
D.~Bertsekas.
\newblock \emph{Reinforcement learning and optimal control}.
\newblock Athena Scientific, 2019.

\bibitem[Bevilacqua et~al.(1988)Bevilacqua, Codenotti, and
  Romani]{bevilparallel}
R.~Bevilacqua, B.~Codenotti, and F.~Romani.
\newblock Parallel solution of block tridiagonal linear systems.
\newblock \emph{Linear Algebra and its Applications}, 104:\penalty0 39--57,
  1988.

\bibitem[Bhardwaj et~al.(2020)Bhardwaj, Boots, and
  Mukadam]{bhardwaj2019differentiable}
M.~Bhardwaj, B.~Boots, and M.~Mukadam.
\newblock Differentiable {G}aussian process motion planning.
\newblock \emph{Proc. of the IEEE International Conference on Robotics and
  Automation (ICRA)}, 2020.

\bibitem[Blondel et~al.(2020)Blondel, Teboul, Berthet, and
  Djolonga]{blondelfastsorting}
M.~Blondel, O.~Teboul, Q.~Berthet, and J.~Djolonga.
\newblock Fast differentiable sorting and ranking.
\newblock In \emph{Proc. of the International Conference on Machine Learning
  (ICML)}, 2020.

\bibitem[Boyd and Vandenberghe(2004)]{boyd_vandenberghe_2004}
S.~Boyd and L.~Vandenberghe.
\newblock \emph{Convex Optimization}.
\newblock Cambridge University Press, 2004.
\newblock \doi{10.1017/CBO9780511804441}.

\bibitem[Campbell et~al.(2020)Campbell, Liu, and Gould]{campbell2020solving}
D.~Campbell, L.~Liu, and S.~Gould.
\newblock {Solving the Blind Perspective-n-Point Problem End-To-End With Robust
  Differentiable Geometric Optimization}.
\newblock In \emph{Proc. of the European Conference on Computer Vision (ECCV)},
  2020.

\bibitem[Cao and Xie(2022)]{Cao2022}
K.~Cao and L.~Xie.
\newblock {Game-Theoretic Inverse Reinforcement Learning: A Differential
  {Pontryagin's} Maximum Principle Approach}.
\newblock \emph{IEEE Transactions on Neural Networks and Learning Systems},
  pages 1--8, 2022.
\newblock \doi{10.1109/TNNLS.2022.3148376}.

\bibitem[Cherian et~al.(2017)Cherian, Fernando, Harandi, and
  Gould]{cherianrankpooling}
A.~Cherian, B.~Fernando, M.~Harandi, and S.~Gould.
\newblock Generalized rank pooling for activity recognition.
\newblock In \emph{Proc. of the IEEE/CVF Conference on Computer Vision and
  Pattern Recognition (CVPR)}, 2017.

\bibitem[Englert et~al.(2017)Englert, Vien, and Toussaint]{Englert2017}
P.~Englert, N.~A. Vien, and M.~Toussaint.
\newblock {Inverse KKT: Learning cost functions of manipulation tasks from
  demonstrations}.
\newblock \emph{The International Journal of Robotics Research}, 36\penalty0
  (13-14):\penalty0 1474--1488, 2017.
\newblock \doi{10.1177/0278364917745980}.

\bibitem[Gill et~al.(2005)Gill, Murray, and Saunders]{snopt}
P.~E. Gill, W.~Murray, and M.~A. Saunders.
\newblock {SNOPT: An SQP Algorithm for Large-Scale Constrained Optimization}.
\newblock \emph{SIAM Review}, page 99–131, 2005.

\bibitem[Gould et~al.(2022)Gould, Hartley, and Campbell]{gouldpami2022}
S.~Gould, R.~Hartley, and D.~Campbell.
\newblock {Deep Declarative Networks}.
\newblock \emph{IEEE Transactions on Pattern Analysis and Machine
  Intelligence}, 44:\penalty0 3988--4004, Aug 2022.
\newblock \doi{10.1109/TPAMI.2021.3059462}.

\bibitem[Hajj and Skelboe(1990)]{hajjmultilevelbt}
I.~N. Hajj and S.~Skelboe.
\newblock A multilevel parallel solver for block tridiagonal and banded linear
  systems.
\newblock \emph{Parallel Computing}, 15:\penalty0 21--45, 1990.

\bibitem[Harris et~al.(2020)Harris, Millman, van~der Walt, Gommers, Virtanen,
  Cournapeau, Wieser, Taylor, Berg, Smith, Kern, Picus, Hoyer, van Kerkwijk,
  Brett, Haldane, del R{\'{i}}o, Wiebe, Peterson, G{\'{e}}rard-Marchant,
  Sheppard, Reddy, Weckesser, Abbasi, Gohlke, and Oliphant]{harris2020array}
C.~R. Harris, K.~J. Millman, S.~J. van~der Walt, R.~Gommers, P.~Virtanen,
  D.~Cournapeau, E.~Wieser, J.~Taylor, S.~Berg, N.~J. Smith, R.~Kern, M.~Picus,
  S.~Hoyer, M.~H. van Kerkwijk, M.~Brett, A.~Haldane, J.~F. del R{\'{i}}o,
  M.~Wiebe, P.~Peterson, P.~G{\'{e}}rard-Marchant, K.~Sheppard, T.~Reddy,
  W.~Weckesser, H.~Abbasi, C.~Gohlke, and T.~E. Oliphant.
\newblock Array programming with {NumPy}.
\newblock \emph{Nature}, 585:\penalty0 357--362, 2020.

\bibitem[Hatz et~al.(2012)Hatz, Schl{\"o}der, and Bock]{Hatz2012}
K.~Hatz, J.~P. Schl{\"o}der, and H.~G. Bock.
\newblock Estimating parameters in optimal control problems.
\newblock \emph{SIAM Journal on Scientific Computing}, 34\penalty0
  (3):\penalty0 A1707--A1728, 2012.
\newblock \doi{10.1137/110823390}.

\bibitem[Hirshman et~al.(2010)Hirshman, Perumalla, Lynch, and Sanchez]{bcyclic}
S.~Hirshman, K.~Perumalla, V.~Lynch, and R.~Sanchez.
\newblock Bcyclic: A parallel block tridiagonal matrix cyclic solver.
\newblock \emph{Journal of Computational Physics}, 229:\penalty0 6392--6404,
  2010.

\bibitem[Howell et~al.(2019)Howell, Jackson, and Manchester]{altro}
T.~A. Howell, B.~E. Jackson, and Z.~Manchester.
\newblock {ALTRO: A Fast Solver for Constrained Trajectory Optimization}.
\newblock In \emph{Proc. of the IEEE/RSJ International Conference on
  Intelligent Robots and Systems (IROS)}, pages 7674--7679, 2019.

\bibitem[Jin et~al.(2020)Jin, Wang, Yang, and Mou]{jin2020pontryagin}
W.~Jin, Z.~Wang, Z.~Yang, and S.~Mou.
\newblock {Pontryagin Differentiable Programming: An End-to-End Learning and
  Control Framework}.
\newblock \emph{Advances in Neural Information Processing Systems},
  33:\penalty0 7979--7992, 2020.

\bibitem[Jin et~al.(2021{\natexlab{a}})Jin, Kuli{\'c}, Mou, and
  Hirche]{Jin2018}
W.~Jin, D.~Kuli{\'c}, S.~Mou, and S.~Hirche.
\newblock Inverse optimal control from incomplete trajectory observations.
\newblock \emph{The International Journal of Robotics Research}, 40\penalty0
  (6-7):\penalty0 848--865, 2021{\natexlab{a}}.
\newblock \doi{10.1177/0278364921996384}.

\bibitem[Jin et~al.(2021{\natexlab{b}})Jin, Mou, and Pappas]{jin2021safe}
W.~Jin, S.~Mou, and G.~J. Pappas.
\newblock {Safe Pontryagin Differentiable Programming}.
\newblock \emph{Advances in Neural Information Processing Systems},
  34:\penalty0 16034--16050, 2021{\natexlab{b}}.

\bibitem[Johnson et~al.(2013)Johnson, Aghasadeghi, and Bretl]{Johnson2013}
M.~Johnson, N.~Aghasadeghi, and T.~Bretl.
\newblock Inverse optimal control for deterministic continuous-time nonlinear
  systems.
\newblock In \emph{Decision and Control (CDC), 2013 IEEE 52nd Annual Conference
  on}, pages 2906--2913, Dec 2013.

\bibitem[Keshavarz et~al.(2011)Keshavarz, Wang, and Boyd]{Keshavarz2011}
A.~Keshavarz, Y.~Wang, and S.~Boyd.
\newblock Imputing a convex objective function.
\newblock In \emph{Intelligent Control (ISIC), 2011 IEEE International
  Symposium on}, pages 613--619. IEEE, 2011.

\bibitem[Laine and Tomlin(2019)]{laineeqlqr}
F.~Laine and C.~Tomlin.
\newblock Efficient computation of feedback control for equality-constrained
  lqr.
\newblock In \emph{Proc. of the IEEE International Conference on Robotics and
  Automation (ICRA)}, 2019.
\newblock \doi{10.1109/ICRA.2019.8793566}.

\bibitem[Landry et~al.(2019)Landry, Manchester, and
  Pavone]{landrybilevelmotion}
B.~Landry, Z.~Manchester, and M.~Pavone.
\newblock A differentiable augmented lagrangian method for bilevel nonlinear
  optimization.
\newblock In \emph{Robotics: Science and Systems}, 2019.

\bibitem[L\'{a}szl\'{o} et~al.(2016)L\'{a}szl\'{o}, Giles, and
  Appleyard]{manycore}
E.~L\'{a}szl\'{o}, M.~Giles, and J.~Appleyard.
\newblock Manycore algorithms for batch scalar and block tridiagonal solvers.
\newblock \emph{ACM Transactions on Mathematical Software}, 42\penalty0 (4),
  2016.

\bibitem[Lee et~al.(2019)Lee, Maji, Ravichandran, and Soatto]{Lee_2019_CVPR}
K.~Lee, S.~Maji, A.~Ravichandran, and S.~Soatto.
\newblock Meta-learning with differentiable convex optimization.
\newblock In \emph{Proc. of the IEEE/CVF Conference on Computer Vision and
  Pattern Recognition (CVPR)}, 2019.

\bibitem[Ljung(1999)]{Ljung1999}
L.~Ljung.
\newblock \emph{{System identification: theory for the user}}.
\newblock Prentice Hall PTR, Upper Saddle River, NJ, 1999.

\bibitem[Mastalli et~al.(2020)Mastalli, Budhiraja, Merkt, Saurel, Hammoud,
  Naveau, Carpentier, Righetti, Vijayakumar, and Mansard]{mastalli20crocoddyl}
C.~Mastalli, R.~Budhiraja, W.~Merkt, G.~Saurel, B.~Hammoud, M.~Naveau,
  J.~Carpentier, L.~Righetti, S.~Vijayakumar, and N.~Mansard.
\newblock {Crocoddyl: An Efficient and Versatile Framework for Multi-Contact
  Optimal Control}.
\newblock In \emph{Proc. of the IEEE International Conference on Robotics and
  Automation (ICRA)}, 2020.

\bibitem[Mensch and Blondel(2018)]{menschddp}
A.~Mensch and M.~Blondel.
\newblock Differentiable dynamic programming for structured prediction and
  attention.
\newblock In \emph{Proc. of the International Conference on Machine Learning
  (ICML)}, 2018.

\bibitem[Molloy et~al.(2018)Molloy, Ford, and Perez]{Molloy2018}
T.~L. Molloy, J.~J. Ford, and T.~Perez.
\newblock Finite-horizon inverse optimal control for discrete-time nonlinear
  systems.
\newblock \emph{Automatica}, 87:\penalty0 442 -- 446, 2018.
\newblock \doi{10.1016/j.automatica.2017.09.023}.

\bibitem[Molloy et~al.(2022)Molloy, Inga~Charaja, Hohmann, and
  Perez]{Molloy2022}
T.~L. Molloy, J.~Inga~Charaja, S.~Hohmann, and T.~Perez.
\newblock \emph{Inverse Optimal Control and Inverse Noncooperative Dynamic Game
  Theory: A Minimum-Principle Approach}.
\newblock Springer International Publishing, 2022.

\bibitem[Mombaur et~al.(2010)Mombaur, Truong, and Laumond]{Mombaur2010}
K.~Mombaur, A.~Truong, and J.-P. Laumond.
\newblock From human to humanoid locomotion---an inverse optimal control
  approach.
\newblock \emph{Autonomous robots}, 28\penalty0 (3):\penalty0 369--383, 2010.

\bibitem[Pineda et~al.(2022)Pineda, Fan, Monge, Venkataraman, Sodhi, Chen,
  Ortiz, DeTone, Wang, Anderson, Dong, Amos, and Mukadam]{pineda2022theseus}
L.~Pineda, T.~Fan, M.~Monge, S.~Venkataraman, P.~Sodhi, R.~T. Chen, J.~Ortiz,
  D.~DeTone, A.~Wang, S.~Anderson, J.~Dong, B.~Amos, and M.~Mukadam.
\newblock {Theseus: A Library for Differentiable Nonlinear Optimization}.
\newblock \emph{Advances in Neural Information Processing Systems}, 2022.

\bibitem[Polizzi and Sameh(2006)]{spikepolizzi}
E.~Polizzi and A.~H. Sameh.
\newblock A parallel hybrid banded system solver: the spike algorithm.
\newblock \emph{Parallel Computing}, 32\penalty0 (2):\penalty0 177--194, 2006.

\bibitem[Ratliff et~al.(2006)Ratliff, Bagnell, and Zinkevich]{Ratliff2006}
N.~D. Ratliff, J.~A. Bagnell, and M.~A. Zinkevich.
\newblock Maximum margin planning.
\newblock In \emph{Proceedings of the 23rd international conference on Machine
  learning}, pages 729--736. ACM, 2006.

\bibitem[Romero et~al.(2023)Romero, Song, and
  Scaramuzza]{romero2023actorcritic}
A.~Romero, Y.~Song, and D.~Scaramuzza.
\newblock Actor-critic model predictive control, 2023.

\bibitem[Russell et~al.(2019)Russell, Toso, and Campbell]{russellimplicit}
C.~Russell, M.~Toso, and N.~Campbell.
\newblock Fixing implicit derivatives: Trust-region based learning of
  continuous energy functions.
\newblock In \emph{Advances in Neural Information Processing Systems},
  volume~32, 2019.

\bibitem[Sarlin et~al.(2020)Sarlin, DeTone, Malisiewicz, and
  Rabinovich]{Sarlin_2020_CVPR}
P.-E. Sarlin, D.~DeTone, T.~Malisiewicz, and A.~Rabinovich.
\newblock Superglue: Learning feature matching with graph neural networks.
\newblock In \emph{Proc. of the IEEE/CVF Conference on Computer Vision and
  Pattern Recognition (CVPR)}, June 2020.

\bibitem[Seal et~al.(2013)Seal, Perumalla, and Hirshman]{pcr_osti_13}
S.~K. Seal, K.~S. Perumalla, and S.~P. Hirshman.
\newblock Revisiting parallel cyclic reduction and parallel prefix-based
  algorithms for block tridiagonal system of equations.
\newblock \emph{Journal of Parallel and Distributed Computing}, 73, 2013.

\bibitem[Sodhi et~al.(2021)Sodhi, Dexheimer, Mukadam, Anderson, and
  Kaess]{sodhi2021leo}
P.~Sodhi, E.~Dexheimer, M.~Mukadam, S.~Anderson, and M.~Kaess.
\newblock {LEO: Learning energy-based models in factor graph optimization}.
\newblock In \emph{Proc. of the Conference on Robot Learning (CoRL)}, 2021.

\bibitem[Tedrake(2023)]{underactuated}
R.~Tedrake.
\newblock \emph{Underactuated Robotics}.
\newblock 2023.
\newblock URL \url{https://underactuated.csail.mit.edu}.

\bibitem[Vlastelica et~al.(2020)Vlastelica, Paulus, Musil, Martius, and
  Rol{\'i}nek]{vlastelicabbox}
M.~Vlastelica, A.~Paulus, V.~Musil, G.~Martius, and M.~Rol{\'i}nek.
\newblock Differentiation of blackbox combinatorial solvers.
\newblock In \emph{Proc. of the International Conference on Learning
  Representations (ICLR)}, 2020.
\newblock *Equal Contribution.

\bibitem[W{\"a}chter and Biegler(2006)]{ipopt}
A.~W{\"a}chter and L.~Biegler.
\newblock {On the implementation of an interior-point filter line-search
  algorithm for large-scale nonlinear programming}.
\newblock \emph{Mathematical Programming}, 106:\penalty0 25--57, 2006.

\bibitem[Xiao et~al.(2023)Xiao, Wang, Hasani, Chahine, Amini, Li, and
  Rus]{xiaobarriernet}
W.~Xiao, T.-H. Wang, R.~Hasani, M.~Chahine, A.~Amini, X.~Li, and D.~Rus.
\newblock Barriernet: Differentiable control barrier functions for learning of
  safe robot control.
\newblock \emph{IEEE Transactions on Robotics}, 39:\penalty0 2289--2307, 2023.

\bibitem[Xu et~al.(2023)Xu, Garg, Milford, and Gould]{xu2023deep}
M.~Xu, S.~Garg, M.~Milford, and S.~Gould.
\newblock Deep declarative dynamic time warping for end-to-end learning of
  alignment paths.
\newblock In \emph{Proc. of the International Conference on Learning
  Representations (ICLR)}, 2023.

\bibitem[Yang et~al.(2023)Yang, Wang, Cadena, and Hutter]{Yang-RSS-23}
F.~Yang, C.~Wang, C.~Cadena, and M.~Hutter.
\newblock {iPlanner: Imperative Path Planning}.
\newblock In \emph{Proceedings of Robotics: Science and Systems}, Daegu,
  Republic of Korea, July 2023.
\newblock \doi{10.15607/RSS.2023.XIX.064}.

\bibitem[Yi et~al.(2021)Yi, Lee, Kloss, Mart\'in-Mart\'in, and
  Bohg]{yi2021iros}
B.~Yi, M.~Lee, A.~Kloss, R.~Mart\'in-Mart\'in, and J.~Bohg.
\newblock Differentiable factor graph optimization for learning smoothers.
\newblock In \emph{Proc. of the IEEE/RSJ International Conference on
  Intelligent Robots and Systems (IROS)}, 2021.

\bibitem[Zanon and Gros(2021)]{zanonsaferlmpc}
M.~Zanon and S.~Gros.
\newblock {Safe Reinforcement Learning Using Robust MPC}.
\newblock \emph{IEEE Transactions on Automatic Control}, 66:\penalty0
  3638--3652, 2021.

\bibitem[Ziebart et~al.(2008)Ziebart, Maas, Bagnell, and Dey]{Ziebart2008}
B.~D. Ziebart, A.~L. Maas, J.~A. Bagnell, and A.~K. Dey.
\newblock Maximum entropy inverse reinforcement learning.
\newblock In \emph{AAAI Conference on Artificial Intelligence}, pages
  1433--1438, 2008.

\end{thebibliography}
\bibliographystyle{abbrvnat}

\appendix
\clearpage
\section{Experimental Setup for Imitation Learning/LfD Task}

We use the code and simulation environments provided by~\cite{jin2021safe} and follow a similar experimental setup, described in more detail in this section. The LfD experimental setting without inequality constraints is referred to in~\cite{jin2020pontryagin} as the inverse reinforcement learning (IRL) task, whereas including inequality constraints is referred to in~\cite{jin2021safe} as the constrained inverse optimal control (CIOC) task.

\paragraph{Demonstration Trajectories.} Up to five demonstration trajectories are used in the LfD setting with no inequality constraints. Each demonstration trajectory is generated by solving a COC problem using the same underlying parameters $\theta$, however the initial conditions differ across trajectories and are sampled randomly. For the setting with inequality constraints present, we use only one demonstration trajectory. 

\paragraph{Initialization.} Consistent with~\cite{jin2020pontryagin, jin2021safe}, we run five imitation learning trials for all LfD experiments, where for each trial, $\theta$ is initialized by adding uniform noise to the true value. For all methods, we use the gradient descent with the same learning rate for a given environment.

\paragraph{Environment Specifications.} Table~\ref{tab:exper} provides summary specifications of the simulation environments used in our experiments, which are consistent with prior works~\citep{jin2020pontryagin, jin2021safe}. Recall that $n$, $m$ denote the number of states and controls, respectively. The number of parameters is denoted $d=|\theta|$ and $T$ is the horizon length.

\begin{table}[h!]
\caption{Description of environments}
\label{tab:exper}
\begin{center}
\begin{tabular}{| c | c | c | c | c | c |}
 \hline
 Environment & $n$ & $m$ & $d$ & $T_{\text{IRL}}$ & $T_{\text{CIOC}}$ \\
 \hline
 Cartpole & 4 & 1 & 9 & 30 & 35 \\
 Quadrotor & 13 & 4 & 11 & 50 & 25 \\
 Robotarm & 4 & 2 & 10 & 35 & 25 \\
 Rocket & 13 & 3 & 12 & 40 & 40 \\
 \hline
\end{tabular}
\end{center}
\end{table}

\paragraph{Additional Environment Parameters.}

Additional specifications are presented in Table~\ref{tab:hparams}. These include log-barrier parameter $\gamma$ used for Safe-PDP (b), time discretization for Forward Euler to discretize continuous time dynamics, the learning rate for minimizing the imitation loss and finally the number of demonstration trajectories per environment. (E) refers to experiments without constraints, whereas (I) means inequality constraints are present.

\begin{table}[h!]
\caption{Additional hyperparameters for LfD experiments}
\label{tab:hparams}
\begin{center}
\begin{tabular}{| c | c | c | c | c |}
 \hline
 Environment & $\gamma$ & $\Delta$ & lr & $n_\text{demos}$ \\
 \hline
 Cartpole (E) & - & 0.1 & $10^{-4}$ & 5 \\
 Quadrotor (E) & - & 0.1 & $10^{-4}$ & 2 \\
 Robotarm (E) & - & 0.1 & $10^{-4}$ & 4 \\
 Rocket (E) & - & 0.1 & $3\times 10^{-4}$ & 1 \\
 \hline
 Cartpole (I) & 0.01 & 0.1 & $8 \times 10^{-5}$ & 1 \\
 Quadrotor (I) & 0.01 & 0.15 & $2\times 10^{-4}$ & 1\\
 Robotarm (I) & 0.01 & 0.2 & $2\times 10^{-3}$ & 1\\
 Rocket (I) & 1 & 0.1 & $10^{-5}$ & 1\\
 \hline
\end{tabular}
\end{center}
\end{table}

\subsection{Cartpole}

The cartpole environment relates to the swing-up and balance task, where a simple pendulum (i.e., massless pole and point mass on the end of the pole) is swinging on a cart and the objective is to balance the pendulum above the cart. Only a horizontal force can be applied to drive the cart. 

\paragraph{Dynamics.} The state variable $x=[ y, q, \dot{y}, \dot{q} ]^\top \in \mathbb{R}^4$ is comprised of the horizontal position of the cart $y$, counter-clockwise angle of the pendulum $\theta$ (from the hanging position) and their respective velocities. Control $u\in\mathbb{R}$ relates to the horizontal force applied to the cart. Dynamics parameters $\theta_\text{dyn}=[m_c, m_lp, \ell]^\top \in \mathbb{R}^3$ relate to the the mass of the cart $m_c$, mass of the point mass at the end of the pole $m_p$ and length of the pole $\ell$. Gravity is set at $g=10$. The continuous time dynamics are given by

\begin{align}
    \ddot{y} &= \frac{1}{m_c + m_c\sin^2 q} \left[ u + m_p\sin q (\ell \dot{q}^2 + g\cos q) \right]\\
    \ddot{q} &= \frac{1}{\ell(m_c + m_p\sin^2 q)}\left[ -u\cos q - m_p \ell \dot{q}^2 \cos q \sin q - (m_c + m_p) g \sin q \right],
\end{align}
with a detailed derivation provided in \cite[Ch. 3]{underactuated}.

\paragraph{Cost Functions. } The cost function is a quadratic cost over states and controls. Specifically,

\begin{equation}
    c_t(x_t, u_t; \theta) = (x_t - x_\text{goal})^\top W (x_t - x_\text{goal}) + w_u u^2,
\end{equation}
and
\begin{equation}
    c_T(x_T, u_T; \theta) = (x_T - x_\text{goal})^\top W (x_T - x_\text{goal}),
\end{equation}
with $x_\text{goal} = [0, 0, -\pi, 0]^\top$. We specify that $W\triangleq \text{diag}(w)$, where $w= [w_y, w_q, w_{\dot{y}}, w_{\dot{q}}] \geq 0$ and furthermore, $w_u \geq 0$. We have that $\theta_\text{cost} = [w^\top, w_u^\top]^\top \in \mathbb{R}^5$.

\paragraph{Constraints.} For the setting with constraints, we apply box constraints for the cart position and controls. Specifically, we enforce $|y| \leq y_\text{max}$ and $|u| \leq u_\text{max}$. We let constraint parameters $\theta_\text{constr} = [y_\text{max}, u_\text{max}]^\top$. The final parameter vector is given by $\theta = [\theta_\text{dyn}^\top, \theta_\text{cost}^\top, \theta_\text{constr}^\top]^\top$.

\subsection{6-DoF Quadrotor Manouvering}

See Section E of the appendix in~\cite{jin2020pontryagin} for the full definition of the quadrotor maneuvering problem. The objective is to drive the quadrotor using propeller thrusts to a goal configuration. The state is given by $[p^\top, \dot{p}^\top, q^\top, \omega^\top]^\top\in\mathbb{R}^{13}$, where $p\in\mathbb{R}^3$ represents the position of the center-of-mass of the quadrotor, $\dot{p}\in\mathbb{R}^3$ represents linear velocity, $q\in\mathbb{R}^4$ is a unit quaternion representation of the quadrotor attitude and $\omega\in\mathbb{R}^3$ represents angular velocity. For the inequality constrained setting, non-linear (squared-norm) constraints on quadrotor position, given by $\|p\|_2^2 \leq r_\text{max}$ and thrust limits are applied.

\subsection{2-Link Robot Arm}

See Section E in the appendix in~\cite{jin2020pontryagin} for a reference to the full description of the robot arm problem. The objective of this OC problem is to move the robot arm to a goal configuration. The state is given by the orientation of the base and elbow joint, as well as respective velocities $[q_1, q_2, \dot{q}_1, \dot{q}_2]^\top \in\mathbb{R}^4$. The control inputs $u=[u_q, u_2]^\top\in\mathbb{R}^2$ relate to applying torques at each joint. Constraints for the inequality constrained setting relate to box constraints on the joint positions, as well as torque limits. The cost function is similar to cartpole, i.e.,~a quadratic penalty to a goal state and over the controls.

\subsection{6-DoF Rocket Landing}

See Section I in the appendix in~\cite{jin2020pontryagin} for the full definition of the 6-DoF rocket landing problem. The objective of this task is to land a rocket modeled as a rigid body (softly) onto a goal position, using thrusters located at the tail. A non-linear constraint over the tilt angle is applied in the inequality constrained setting. The cost function penalizes deviations from the goal configuration, as well as fuel cost.

\section{Experimental Setup for Synthetic Benchmark}

We discuss in more detail how blocks for $H, A, B, C$ are generated in this section. Elements for the Jacobian and Hessian blocks for the objective function and constraints are sampled independently and identically distributed (i.i.d.) from a standard Gaussian. Hessian blocks are made symmetric using $H^\text{sym} = (H + H^\top) / 2$, where $H$ is the initially generated random matrix. Each block's condition number is modified by applying an SVD to each block and adjusting the diagonal entries. Elements of the downstream loss gradients $v=\tD_\xi \cL(\xi)^\top$ are also sampled i.i.d. from a standard Gaussian.

\section{Additional Results for the Imitation Learning/LfD Tasks}

In this section, we provide additional detailed analysis around the imitation learning/LfD setting with no inequality constraints. Figure~\ref{fig:eqexper} illustrates the results of these experiments. We plot the percentage difference in loss across the full training curve between \methodname{} and PDP for five random initializations for $\theta$. For the quadrotor, robotarm and rocket environments, we observe almost no difference in learning curves. This is expected since \methodname{} and PDP are computing the same trajectory derivative. For cartpole however, we see that PDP fails catastrophically (resulting in undesirable spikes in the imitation loss) on two occasions across the five trials. We suspect this is due to numerical instability of the Riccati equations, arising from the local geometric properties of the dynamics at solved trajectories.

\begin{figure}[t!]
     \centering
     \begin{subfigure}[b]{0.24\textwidth}
         \centering
         \includegraphics[width=\textwidth]{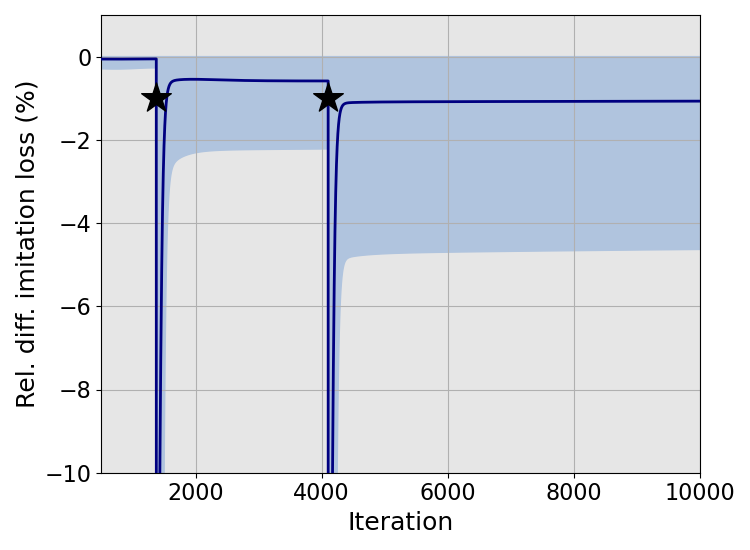}
         \caption{Cartpole}
         \label{fig:cartpoleeq}
     \end{subfigure}
     \hfill
     \begin{subfigure}[b]{0.24\textwidth}
         \centering
         \includegraphics[width=\textwidth]{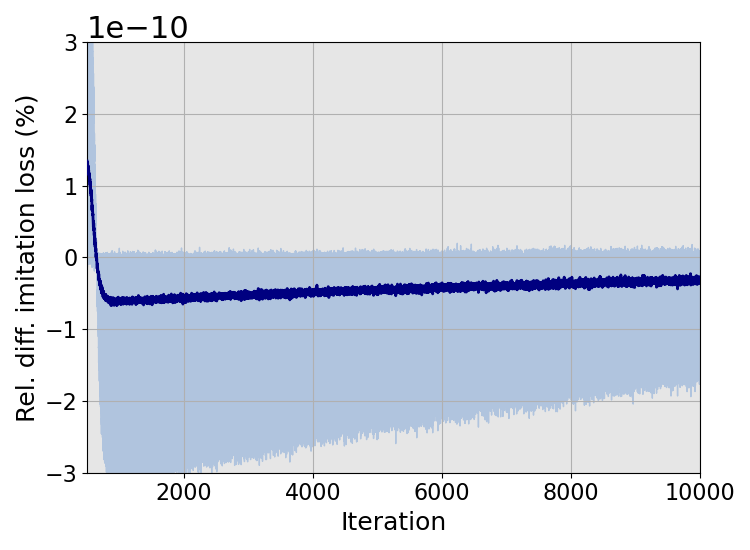}
         \caption{6-DoF Quadrotor}
         \label{fig:quadrotoreq}
     \end{subfigure}
     \hfill
     \begin{subfigure}[b]{0.24\textwidth}
         \centering
         \includegraphics[width=\textwidth]{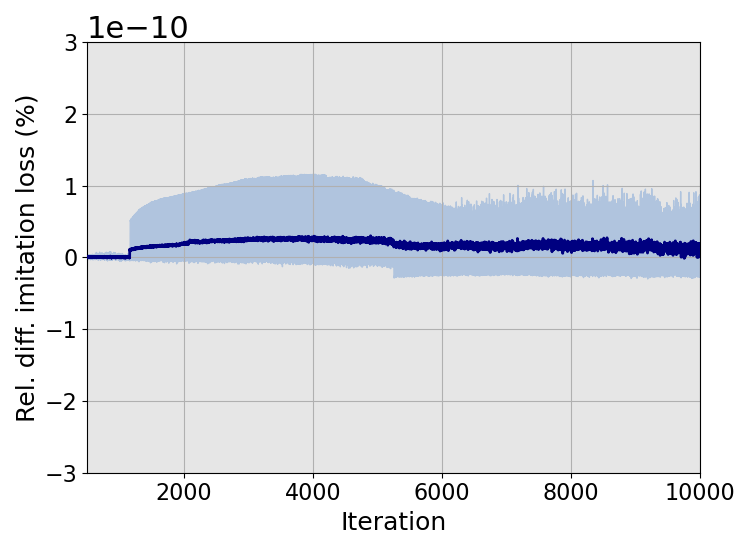}
         \caption{Robotarm}
         \label{fig:robotarmeq}
     \end{subfigure}
    \begin{subfigure}[b]{0.24\textwidth}
         \centering
         \includegraphics[width=\textwidth]{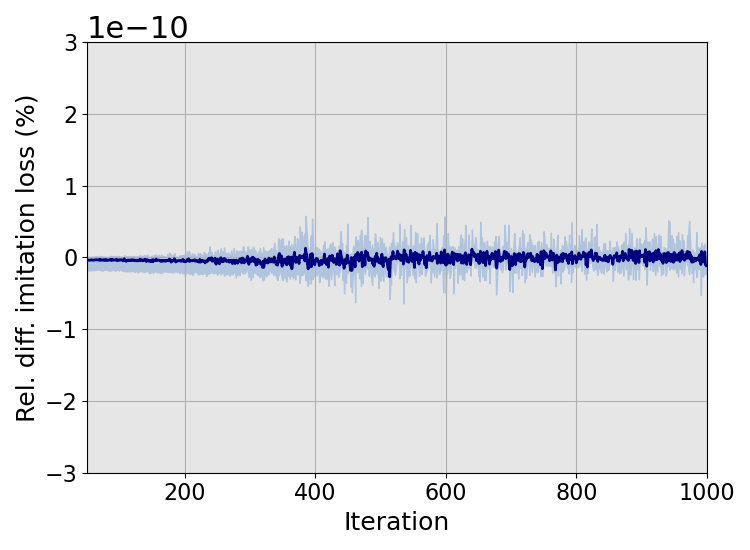}
         \caption{6-DoF Rocket}
         \label{fig:rocketeq}
     \end{subfigure}
    \caption{Results for the imitation learning task with no inequality constraints present. The y-axis is the (percentage) relative difference in the imitation loss, defined to be (IDOC loss - PDP loss) / PDP loss (lower is better). For Cartpole, locations where the PDP gradient fails is marked by stars. Shaded regions indicate min/max values over five trials.}
        \label{fig:eqexper}
\end{figure}

\section{Additional Timings for Simulation Experiments}

In this section, we provide additional compute time experiments for the imitation learning/LfD tasks evaluated in the simulation environments. Given the relatively small size for the COC problems, there is very limited advantage with respect to compute time for parallel evaluation of trajectory derivatives. However, as a proxy to parallelization, \methodname{} can be \textit{vectorized} across timesteps using the numerical linear algebra library Numpy~\cite{harris2020array}. All experiments in this section are run on a single thread of an AMD Ryzen 9 7900X 4.7Ghz desktop CPU. 

The experimental setting where inequality constraints are present is challenging to vectorize compared to the setting with only equality constraints, because different timesteps may have different numbers of active inequality constraints. While matrices $A$ and $C$ presented in Equation~\ref{eq:ddn_eq} are still block structured, the blocks may not be of uniform size in this setting. We present in Figure~\ref{fig:timing_coc} and~\ref{fig:timing_logb} the mean computation time (along with the upper/lower bound) per iteration across five trials with 10k iterations for each trial for all environments except for rocket (1k).  We observe around a 2$\times$ speedup over PDP by computing our IFT/DDN gradients, even before direct computation of vector-Jacobian products. Computing VJPs directly yields a further (approx.)~10\% saving in compute time over constructing the trajectory derivative explicitly.

In our non-optimized Python implementation of \methodname{}, we batch together and vectorize computations involving blocks with an identical number of active constraints. As expected, trajectory derivatives for inequality constrained problems are slightly slower to compute compared to the equivalent without (hard) inequality constraints (log-barrier approximation), due to the necessary computational overheads required for identifying and batching blocks.

Note that Safe-PDP~\cite{jin2021safe} (for hard inequality constraints) and PDP~\cite{jin2020pontryagin} (used for differentiating through log-barrier problems) derivatives are implemented using a custom LQR and equality constrained LQR solver, respectively. These solvers have markedly different implementations, which explains differences in computation time between Safe-PDP and PDP.

\begin{figure}[h]
     \centering
     \begin{subfigure}[b]{0.32\textwidth}
         \centering
         \includegraphics[width=\textwidth]{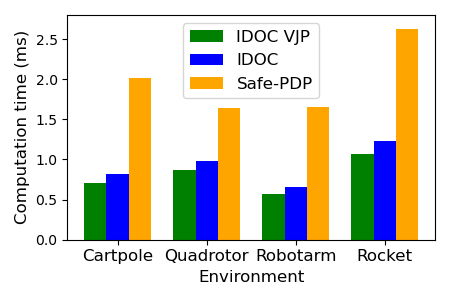}
         \caption{Hard Constraints}
         \label{fig:timing_coc}
     \end{subfigure}
     \begin{subfigure}[b]{0.32\textwidth}
         \centering
         \includegraphics[width=\textwidth]{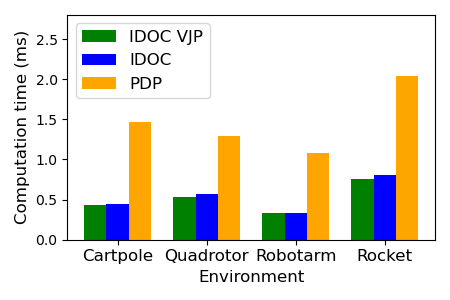}
         \caption{Log-Barrier}
         \label{fig:timing_logb}
     \end{subfigure}
    \caption{Computation times for trajectory derivatives (per trajectory, per iteration) for the CIOC task. As expected, the log-barrier method with no ``hard" inequality constraints affords faster derivative computation. }
    \label{fig:ineqtimings}
\end{figure}

\end{document}